\def\eqref#1{equation~\ref{#1}}
\def\1{\bm{1}}
\def\vtheta{{\bm{\theta}}}
\def\vpsi{{\bm{\psi}}}
\DeclareMathAlphabet{\mathsfit}{\encodingdefault}{\sfdefault}{m}{sl}
\SetMathAlphabet{\mathsfit}{bold}{\encodingdefault}{\sfdefault}{bx}{n}
\title{Do Transformer World Models Give \\ Better Policy Gradients?}
\author{%
Michel Ma\thanks{Equal contribution.},\,  
Tianwei Ni,\, 
Clement Gehring,\, 
Pierluca D'Oro\footnotemark[1],\, 
Pierre-Luc Bacon 
\\
  Mila, Université de Montréal
  \\
  {\footnotesize\texttt{\{michel.ma, tianwei.ni, clement.gehring\}@mila.quebec}}\\
  {\footnotesize\texttt{\{pierluca.doro, pierre-luc.bacon\}@mila.quebec}}
}
\renewcommand{\cite}{\citep}
\newtheorem{remark}{Remark}
\begin{document}

\doparttoc 
\faketableofcontents 

\maketitle

\begin{abstract}
A natural approach for reinforcement learning
is to predict future rewards by unrolling a 
neural network world model, and to backpropagate through the resulting computational graph to learn a policy.
However, this method often becomes impractical for long horizons since typical world models induce hard-to-optimize loss landscapes. 
Transformers are known to efficiently propagate gradients over long horizons: could they be the solution to this problem?
Surprisingly, we show that commonly-used transformer world models 
produce \emph{circuitous gradient paths}, which can be detrimental to long-range policy gradients.
To tackle this challenge, we propose a class of world models called Actions World Models (AWMs), designed to provide more direct routes for gradient propagation. We integrate such AWMs into a policy gradient framework that underscores the relationship between network architectures and the policy gradient updates they inherently represent.
We demonstrate that AWMs can generate optimization landscapes that are easier to navigate even when compared to those from the simulator itself. This
property allows transformer AWMs to produce better policies than competitive baselines 
in realistic long-horizon tasks.
\end{abstract}

\section{Introduction}

Given a class of parameterized policies, policy optimization methods aim to find parameters $\vtheta$ that maximize a performance criterion $J$ -- typically the expected sum of rewards over the episodes of experiences generated by an agent in an environment. 
When the environment dynamics are known and differentiable, the gradient of the objective $J$ can be directly evaluated through automatic differentiation and used 
to optimize $J$ by gradient ascent.
If the dynamics are unknown, a differentiable world model can be trained on interaction data, whose predicted states will be backpropagated through.
This model-based approach to reinforcement learning~(RL) has been explored early on in the field with~\citet{werbos1974beyond, schmidhuber1990making, miller1995neural} and led to contemporary work in deep RL such as~\citet{hafner2022mastering,hafner2023mastering}. 
While backpropagation through time~\cite{werbos1990backpropagation} using a world model is conceptually appealing and readily compatible with deep learning practice, most attempts to scale it up have been unsuccessful beyond a few dozen steps~\cite{hafner2022mastering, ghugare2023simplifying}.
Indeed, even in the presence of perfectly accurate differentiable simulators, policy gradients obtained by backpropagation through time can quickly become unstable over long horizons~\cite{SuhSZT22, metz2021gradients}.


\begin{figure*}[t]
    \centering
    \subfloat[Markovian world model.]
    {{\includegraphics[width=0.33\linewidth]{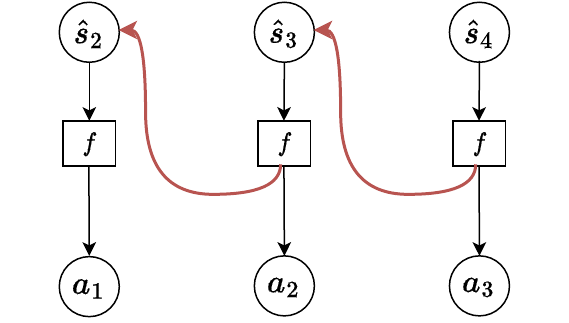}\label{fig:markovian-vis}}}
    \subfloat[History World Model.]
    {{\includegraphics[width=0.33\linewidth]{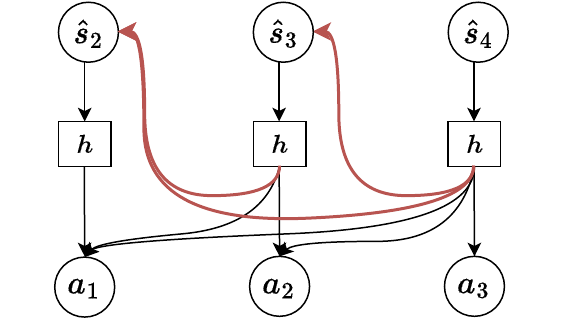}\label{fig:hwm-vis}}}
    \subfloat[Actions World Model.]
    {{\includegraphics[width=0.33\linewidth]{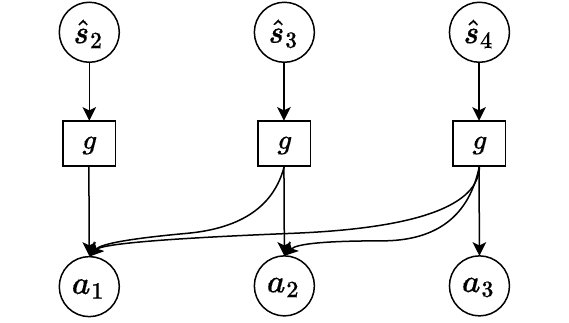}\label{fig:awm-vis}}}
    \caption{\footnotesize \textbf{Diagram illustrating gradient flows through different world model types from states to actions.} Circuitous (longer than necessary) gradient paths go through connections highlighted in red. An Actions World Model has no circuitous gradient paths, allowing gradients to directly flow from states to actions through a single application of a world model.}
    \label{fig:diagram}
    \vspace{-1em}
\end{figure*}
This problem of stability in differentiating through world models over long horizons echoes similar challenges observed in sequence modeling. The latter field has experienced numerous advances in deep network architectures specifically aimed at alleviating this challenge.
As a notable example, attention-based models such as transformers were designed to directly propagate gradients from any output to any input arbitrarily far in the past, within their context length~\cite{schmidhuber1992learning,Vaswani2017}.
It is, therefore, natural to ponder whether using a transformer as a world model and differentiating through it could overcome the difficulties encountered in model-based policy optimization.
Or, in other words, to ask the question: \emph{do transformer world models give better policy gradients?} In this paper, we provide a new answer to this question: surprisingly, transformers do not necessarily confer the same benefit for policy optimization when used as world models conditioned on the full history of states and actions. Instead, conditioning solely on a sequence of actions tends to yield better-behaved policy gradients. This, in turn, leads to improved temporal credit assignment and policy performance.

We first show that backpropagating naively through a transformer predicting the next state conditioned on the full history 
~\cite{micheli2022transformers, chen2022transdreamer, robine2023transformer} does not necessarily lead to better policy gradients in long-horizon tasks.
We demonstrate, conceptually and theoretically, that this phenomenon can be attributed to the existence of \emph{circuitous gradient paths}: long state-to-state gradient paths generated during the autoregressive unrolling of the model~(see Figure~\ref{fig:diagram}). 

To avoid the creation of those longer-than-necessary gradient paths, 
we propose an alternative class of world models, which we call \emph{Actions World Models}~(AWMs), of the form $ {\hat{s}_{t+1} = g(s_1, a_1, \dots, a_{t-1}, a_t ) } $ conditioned solely on the history of past actions and an initial state. 
An AWM is forced to create its own internal dynamics, potentially very different from the one of the environment, while being able to predict future states. This inductive bias constrains the temporal accumulation of gradients to only happen within the neural network of the AWM,
fully harnessing its gradient properties. 

We show theoretically that an AWM directly inherits the gradient propagation properties of the underlying network architecture.
Thus, AWMs yield policy gradients subject to bounds similar to the ones previously derived in deep sequence models~\cite{PascanuMB13, KergKGGBL20}. 
Additionally, our theoretical framework provides a unified perspective on many backpropagation-based policy optimization methods by exposing the relationship between the network architecture and the policy gradient updates that they induce.
For instance, it allows us to cast backpropagation through unrolled Markovian dynamics~\cite{HeessWSLET15} as a specific AWM instantiated with a vanilla recurrent neural network. Our framework paves the way for new algorithmic developments by fostering novel synergies between the study of network architectures in deep learning and the policy optimization updates they implicitly express.

Finally, through a series of experiments, we showcase the remarkable empirical properties of backpropagation-based policy optimization with AWMs.  
We demonstrate in illustrative domains with chaotic or non-differentiable dynamics that AWMs can give better policy gradients than the simulator itself, by forcing the policy optimization landscape to be easy to optimize. This leads to successful policy optimization via backpropagation even when the same procedure would fail with the true underlying model. 
On a testbed of realistic long-horizon tasks~\cite{howe_myriad}, we benchmark policy optimization with transformer AWMs against other competitive model-free and model-based methods, showing superior performance.

\section{Background}
\label{sec:background}

\textbf{Problem definition}~ We are interested in deterministic\footnote{
See \citet{laidlaw2023bridging} for a discussion on the importance of the study of deterministic MDPs.} discrete-time finite-horizon Markov Decision Processes~(MDPs)~\citep{fairbank2014value}, defined as $\mathcal M = (\mathcal S, \mathcal A, f, r, H, s_1)$, where $\mathcal S \subseteq \mathbb{R}^n$ is the state space, $\mathcal A \subseteq \mathbb{R}^m$ is the action space, $f: \mathcal S \times \mathcal A \to \mathcal S$ is the differentiable transition dynamics, $r: \mathcal S \to \mathbb R$ is the known differentiable reward function, $H$ is the horizon and $s_1 \in \mathcal S$ is the initial state.
The behavior of an agent in the environment is described by a policy $\pi_{\vtheta}: \mathcal S \to \mathcal A$, belonging to a space of parameterized differentiable deterministic\footnote{This assumption can be relaxed to a stochastic policy using the reparameterization trick~\citep{KingmaW13}.} stationary Markov policies $\Pi = \{ \pi_{\vtheta} : \vtheta \in \mathbb{R}^d \}$.
For clarity, we focus on stationary rewards and dynamics, but our results could readily be adapted to the non-stationary case.

\textbf{Backpropagation-based Policy Optimization} The goal of the agent is to maximize the cumulative episodic rewards. In the rest of the paper, we consider the following problem:
\vspace{-0.4em}
\begin{equation} 
\label{eq:mdpobj}
\begin{gathered}
  \begin{aligned}
    \text{maximize} \quad & J^f(\vtheta; H) := \sum_{t=1}^H r(s_t),  & \text{subject to} \quad & s_{t+1} = f(s_{t}, a_{t}),\, a_k = \pi_\vtheta (\text{sg}[s_k]) \enspace .
  \end{aligned}
\end{gathered}
\end{equation}
where $\mathrm{sg}[\cdot]$ is the stop-gradient operator. 
Model-based policy gradient methods solve this problem by unrolling (simulating) the dynamics forward and computing the gradient of the corresponding unconstrained problem. The presence of the stop gradient operator in our formulation aligns with the contemporary practice~\citep{hafner2022mastering,hafner2023mastering,ghugare2023simplifying}. Its inclusion in our formulation does not impact 
 performance (see Appendix \ref{app:stopgrad-ablation} for an ablation) and simplifies our theoretical analysis.

\begin{wrapfigure}{R}{0.5\textwidth}
\vspace{-0.5cm}
\begin{algorithm}[H]
   \small
   \caption{Backpropagation-based Policy Optimization (BPO)}
   \label{alg:alg}
   \textbf{Input:} Initial buffer $\mathcal B$, initial policy parameters $\vtheta$, initial model parameters $\vpsi$, learning rates $\{ \alpha_{\vtheta}, \alpha_{\vpsi} \}$, world model class~$\mathcal W$.
   \begin{algorithmic}[1] 
       \WHILE{not exceeding training steps}
       \STATE Collect an episode with $\pi_{\vtheta}$ and add it to $\mathcal B$
       \FOR{each world model learning step}
       \STATE $\vpsi \gets \vpsi - \alpha_{\vpsi} \nabla_{\vpsi} \ell^{\mathcal W}(\tau; \vpsi),\quad \tau \sim \mathcal B$
       \ENDFOR
       \FOR{each policy learning step}
       \STATE Compute $J^{\mathcal W}(\vtheta; \vpsi)$ by unrolling the world model
       \STATE Compute $\nabla_{\theta} J^{\mathcal W}(\vtheta; \vpsi)$ by backpropagation
       \STATE $\vtheta \gets \vtheta + \alpha_{\vtheta}  \nabla_{\vtheta} J^{\mathcal W}(\vtheta; \vpsi)$
       \ENDFOR
       \ENDWHILE
   \end{algorithmic}
\end{algorithm}
\vspace{-2em}
\end{wrapfigure}
When the true dynamics $f$ are unknown, 
a transition function $\hat{f}_\vpsi(s_t, a_t)$ belonging to a space of parameterized differentiable functions $\mathcal{F} = \{ f_\vpsi : \vpsi \in \mathbb{R}^{d_\psi} \}$ must be learned to perform policy optimization. 
Given transitions $s_t, a_t, s_{t+1}$ sampled from the environment, an approximate Markovian world model $\hat{f}_\vpsi$ can be learned with the loss ${\ell^{\hat f}(\tau;\vpsi) = \sum_{t=1}^{H-1} \| s_{t+1}^\tau - \hat{f}_\vpsi(s_t^\tau, a_t^\tau) \| ^2}$,
where ${\tau = (s_1^\tau,a_1^\tau,\dots,s_{H}^\tau)}$ is a trajectory. For the remainder of this paper, consider $\| \bullet\|$ to denote the $L^2$ norm. 
Given a learned model, policy and initial state, the corresponding return $J^{\hat f}(\vtheta; \vpsi)$ can be computed through rollouts and the gradient $\nabla_{\vtheta} J^{\hat{f}} (\vtheta; \vpsi)$ can be obtained  by backpropagation.
We call this basic algorithm, outlined in Algorithm~\ref{alg:alg}, \emph{Backpropagation-based Policy Optimization}~(BPO).

\section{Policy Gradient Computation with History World Models}

When running Algorithm~\ref{alg:alg}, the performance of the resulting policy depends on the quality of the gradient approximation.
Effective long-term credit assignment capabilities are required for computing the gradient $\nabla_{a_t} r(s_{t+k})$, as provided by the world model, for potentially very large values of $k$ in challenging control tasks. This means understanding how a reward at a temporally distant time changes in response to variations in a specific action taken at an earlier moment.
Therefore, the long-term credit assignment properties of a model-based policy optimization algorithm are intimately tied to the structure of the unrolled world model.

One fundamental fact about backpropagation is that its success in learning long-term dependencies is intimately linked to the length of the paths involved in the neural network's gradient computation~\citep{pascanu2013construct,zhang2016architectural}.
Indeed, the success of transformers is often attributed to this phenomenon: ``one key factor affecting the ability to learn such dependencies is the length of the paths forward and backward signals have to traverse in the network''~\citep{Vaswani2017}. 
When trained for sequence modeling tasks, transformers stand out compared to other architectures because the length of these paths does not linearly increase with the sequence length.

Recent model-based RL approaches have introduced the use of transformer History World Models~(HWMs), which predict the next state as $\hat{s}_t = h(s_{1:{t-1}}, a_{1:{t-1}})$ based on the full history of states and actions ~\citep{micheli2022transformers,robine2023transformer}. Their use leads to improved performance due to better prediction abilities but, surprisingly, these approaches were not able to take advantage of the gradient propagation properties of the transformer architecture, as exemplified by previous work which only unrolls them for a few steps~\citep{chen2022transdreamer}.

In practice, an HWM is unrolled in this manner to compute the cumulative reward for a given policy:
\vspace{-0.4em}
\begin{equation} 
\label{eq:historyobj}
\begin{gathered}
  \begin{aligned}
    \text{maximize} \quad & J^h(\vtheta; H) := \sum_{t=1}^H r(\hat{s}_t), & \text{subject to} \quad & \hat{s}_{t+1} = h(\hat{s}_{1:t},  a_{1:k}), \, a_k = \pi_\vtheta (\text{sg}[\hat{s}_k]) \enspace .
  \end{aligned}
\end{gathered}
\end{equation}

HWMs are typically learned by employing a prediction objective $\ell^{h}(\tau;\vpsi) = \sum_{t=1}^{H-1} \| s_{t+1}^\tau - h_\vpsi(s_{1:t}^\tau, a_{1:t}^\tau) \| ^2$, similar to the one employed for a Markovian model,
where $s_{1:t}^\tau$ and $a_{1:t}^\tau$ denote subsequences of states and actions from an episode $\tau$.
While history-conditioned dynamics are in principle not necessary in an MDP\footnote{Policy gradients can be computed through equation \ref{eq:historyobj} similarly in partially observable MDPs (POMDP), where states are replaced with observations. The resulting analysis would, therefore, still hold in a POMDP.}, one may hope that the gradient properties of transformers might manifest themselves positively in the policy gradient. It stands to reason that the policy gradients through a transformer may be able to more effectively capture long-term dependencies than a Markovian model would.
Nonetheless, this apparently natural usage of a transformer has unintended consequences for policy optimization. 

Visually, we show in Figure \ref{fig:diagram} the gradient paths induced by unrolling a \emph{state-based} world model, whether Markovian (Figure \ref{fig:markovian-vis}) or history-dependent (Figure \ref{fig:hwm-vis}), from state predictions to actions. The impact of an action on a future state is measured by navigating the paths connecting the two generated by a world model. Some of these are \emph{circuitous gradient paths}: 
these paths do not go from state to action directly but instead include auto-regressive state predictions from the model itself.
Such paths are perilous for long-horizon policy optimization; errors, noise, and gradient behaviors that may be good internally can rapidly degrade outside the model, reducing the possible advantages of backpropagating through transformers. 

In particular, the following theorem shows that even if the gradient of a transformer HWM is bounded, the resulting policy gradient can still grow exponentially. The proof of this and the rest of our results can be found in Appendix~\ref{sec:proofs}.
\begin{restatable}{theorem}{histExplode}
\label{thm:hist-explode}
    Let the gradient norm of $h$ with respect to its inputs be bounded by $L_a$ and $L_s$: $\|\frac{\partial h(\hat{s}_{1:t}, a_{1:t})}{\partial a_k}\| \leq L_a$ and $\| \frac{\partial h(\hat{s}_{1:t}, a_{1:t})}{\partial \hat{s}_i}\| \leq L_s$ for all $s_{1:t}, a_{1:t}, k, i$.  Let $r$ be the $L_r$-Lipschitz reward function from a Markov Decision Process $\mathcal M$, $\Pi_\vtheta$ a parametric space of differentiable deterministic $L_\pi$-policies. Given $\pi_\vtheta \in \Pi_\vtheta$, the norm of the policy gradient $\nabla_\vtheta J^h(\vtheta; H)$ of $\pi_\vtheta$ under a History World Model $h$ grows asymptotically as a function of the horizon $H$ as:
    \begin{align*}
        \| \nabla_\theta J^h (\vtheta; H) \| &=O(HL_r +H^2 L_\pi + H^2 L_a + H^2 L_s^H) = O(L_s^H) \enspace .
    \end{align*}
\end{restatable}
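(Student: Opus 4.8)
The plan is to differentiate the unrolled objective directly and to track how the total derivative $\frac{d\hat s_t}{d\vtheta}$ accumulates through the history conditioning. First I would write $\nabla_\vtheta J^h(\vtheta;H) = \sum_{t=1}^H \frac{\partial r(\hat s_t)}{\partial \hat s_t}\frac{d\hat s_t}{d\vtheta}$, so that $\|\nabla_\vtheta J^h\| \le L_r \sum_{t=1}^H \|\frac{d\hat s_t}{d\vtheta}\|$ by the Lipschitz bound on $r$. The key object is therefore $G_t := \|\frac{d\hat s_t}{d\vtheta}\|$, and the entire difficulty is that, unlike a Markovian model, an HWM feeds every past prediction back into $h$.

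Next I would differentiate the constraint $\hat s_{t+1} = h(\hat s_{1:t}, a_{1:t})$ with $a_k = \pi_\vtheta(\mathrm{sg}[\hat s_k])$. Because of the stop-gradient, $a_k$ depends on $\vtheta$ only through the explicit policy argument, so $\|\frac{d a_k}{d\vtheta}\| \le L_\pi$ and no gradient flows from $\hat s_k$ into $a_k$. Applying the chain rule to the history argument gives the recursion $\frac{d\hat s_{t+1}}{d\vtheta} = \sum_{i=1}^t \frac{\partial h}{\partial \hat s_i}\frac{d\hat s_i}{d\vtheta} + \sum_{k=1}^t \frac{\partial h}{\partial a_k}\frac{d a_k}{d\vtheta}$. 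Taking norms and inserting the bounds $L_s, L_a, L_\pi$ yields $G_{t+1} \le L_s \sum_{i=1}^t G_i + t\,L_a L_\pi$, with $G_1 = 0$ since $\hat s_1 = s_1$ is fixed. The crucial feature --- the signature of the circuitous paths of Figure~\ref{fig:diagram} --- is that the state sum runs over all $i \le t$, not merely over $i = t$ as in the Markovian case.

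I would then solve this recursion by passing to the cumulative sum $S_t := \sum_{i=1}^t G_i$. Subtracting consecutive terms turns the convolution-type inequality into the first-order linear recurrence $S_{t+1} \le (1+L_s) S_t + t\,L_a L_\pi$, whose solution grows like $(1+L_s)^t$, i.e.\ exponentially in the index with a rate governed by $L_s$. Hence $G_t = O(L_a L_\pi\, L_s^t)$ once constants are absorbed, and substituting back gives $\|\nabla_\vtheta J^h\| \le L_r \sum_{t=1}^H G_t$. Collecting the polynomial-in-$H$ prefactors that arise from the $H$ reward terms and from the $t\,L_a L_\pi$ injection term reproduces the stated form $O(HL_r + H^2 L_\pi + H^2 L_a + H^2 L_s^H)$, whose dominant term for $L_s > 1$ is the exponential $L_s^H$.

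The hard part will be the bookkeeping in the recursion step: because $h$ is conditioned on the full history, every Jacobian block $\frac{\partial h}{\partial \hat s_i}$ contributes, so one cannot treat $G_t$ via a single one-step contraction. The care lies in converting the all-to-all coupling $G_{t+1} \le L_s\sum_{i\le t} G_i + t\,L_a L_\pi$ into a tractable scalar recurrence through $S_t$ and correctly extracting the exponential rate (with base $\sim L_s$); the remaining steps --- the Lipschitz bound on $r$, the summation over the horizon, and absorbing constants into the asymptotic form --- are routine.
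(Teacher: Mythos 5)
Your proposal is correct and follows the same basic strategy as the paper---chain rule through the unrolled objective, triangle/Cauchy--Schwarz with the $L_r, L_\pi, L_a, L_s$ bounds, then a recursion capturing the circuitous paths---but the decomposition differs in a genuine way. The paper fixes an action index $k$ and bounds the per-action total derivative via $\bigl\|\frac{dh_t}{da_k}\bigr\| \leq L_a + L_s \sum_{i=k+1}^{t-1} \bigl\|\frac{dh_i}{da_k}\bigr\|$, concluding $O(L_a + L_s^{t-k})$ separately for each of the $O(H^2)$ pairs $(t,k)$ before summing; you instead aggregate all parameter dependence into the single quantity $G_t = \bigl\|\frac{d\hat{s}_t}{d\vtheta}\bigr\|$, obtain the all-to-all inequality $G_{t+1} \leq L_s \sum_{i\leq t} G_i + t\,L_a L_\pi$, and solve it with the cumulative-sum substitution $S_t = \sum_{i \leq t} G_i$, yielding $S_{t+1} \leq (1+L_s)S_t + t\,L_a L_\pi$. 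Your route buys cleaner bookkeeping (one scalar recurrence instead of a family indexed by $k$) and makes explicit that the exact base of the exponential produced by the combinatorial accumulation over all circuitous paths is $1+L_s$; the paper's per-action decomposition instead keeps the path-length structure $t-k$ visible, which is precisely what its tightness argument (Appendix~\ref{tightness-hist}) reuses to extract the matching lower bound $\Omega(L_s^H)$ from the single longest product chain. One caveat, which applies to the paper's own proof just as much as to yours: the step ``hence $G_t = O(L_a L_\pi L_s^t)$ once constants are absorbed'' is not literally valid, since $(1+L_s)^t \neq O(L_s^t)$ for any fixed $L_s > 0$ (for $L_s = 1$ the recurrence genuinely gives $2^t$ growth, not $O(1)$); the paper performs the same compression when it keeps only the longest chain of products and writes $O(L_a + L_s^{t-k})$. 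So both arguments should strictly conclude $O((1+L_s)^H)$, with the theorem's $O(L_s^H)$ read as ``exponential in $H$ at a rate governed by $L_s$''; if you state your final bound with base $1+L_s$, your version is fully rigorous and, on this point, slightly more transparent than the paper's.
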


\begin{wrapfigure}[16]{r}{0.4\textwidth}
\vspace{-0.3cm}
    \centering
    \includegraphics[width=0.95\linewidth]{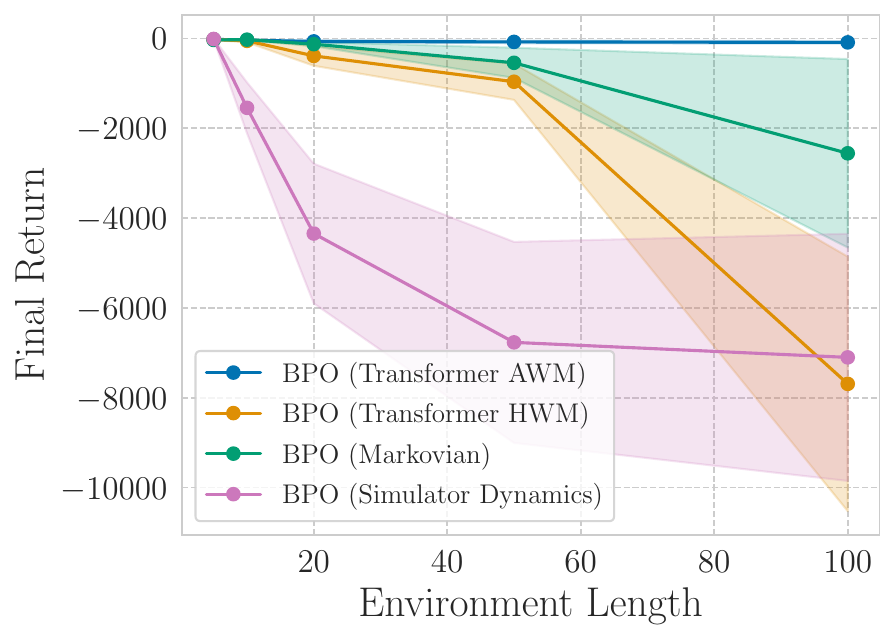}
    \caption{\footnotesize\textbf{Transformer AWMs outperforms all BPO baselines in chaotic environments.} Final performances of BPO with different world models on the double-pendulum environment (10 seeds $\pm$ std).}
    \label{fig:double-pend-rl}
\end{wrapfigure}
The exponential dependency, related to exploding gradients in RNNs, is apparent in Figure~\ref{fig:hwm-vis}, where the longest path from any given state to action is no shorter than the longest path through a Markovian model. The bound is shown to be tight in Appendix \ref{tightness-hist}. In the one-dimensional case, when all gradients are positive scalars, the above bound can be turned into a lower bound.

To illustrate the negative effects of circuitous gradient paths in practice, we conduct experiments in a double-pendulum environment where the agent must move the double-pendulum to a target position. We describe and visualize the task more precisely in Section~\ref{sec:experiments}.
The results in Figure~\ref{fig:double-pend-rl} show that BPO does not lead to successful policies with both Markovian models and HWMs in this notoriously chaotic environment. 
This result demonstrates that state-conditioned transformers do not provide any benefits to policy gradients due to circuitous gradient paths, unlike in the traditional supervised learning setting where they were initially introduced.
In the rest of the paper, we will see how using transformers as a different type of world model allows us not only to overcome the limitations of Markovian models and HWMs, but also to outperform policy optimization performed by differentiating through the dynamics of the simulator, as well as competitive model-free approaches.

\section{Policy Gradient Computation with Actions World Models}
Policy gradients obtained from state-based world models, whether Markovian models or HWMs, are necessarily computed by backpropagating through sequences of states generated by autoregressively unrolling the models. Each state prediction uses as input previously generated states; consequently, the longest path between any action and reward traverses through circuitous gradient paths in the state space, \emph{regardless of the underlying world model architecture}. Through careful examination of Figure \ref{fig:diagram} and Theorem~\ref{thm:hist-explode}, it quickly becomes apparent that the state dependency in $h$ or $f$ is at the root cause of such paths. 

Inspired by this observation, we propose to remove all circuitous gradient paths in the simplest and most direct way: conditioning the world model on actions only. We thus consider a model which predicts a future state given an initial state and a sequence of actions. The resulting model, which we call an \emph{Actions World Model}~(AWM), is of the form $\hat{s}_{t+1} = g(s_1, a_1, a_2, \hdots, a_t)$.
This structure for the world model constrains gradients to pass directly from rewards to actions through the network parameterizing the AWM. Note that previous work showed that actions are theoretically sufficient to predict future states in deterministic MDPs and often sufficient in practice for most stochastic environments~\citep{rezende_causallycorrect}.

Given an AWM, a policy is learned by computing and employing (as in Algorithm~\ref{alg:alg}) the gradient of the objective:
\vspace{-0.4em}
\begin{equation} 
\label{eq:awmobj}
\begin{gathered}
  \begin{aligned}
    \text{maximize} \quad & J^g(\vtheta; H) := \sum_{t=1}^H r(\hat s_t), &
    \text{subject to} \quad &\hat{s}_{t+1}= g(s_1, a_{1:t}), \,a_k = \pi_\vtheta (\mathrm{sg}[\hat{s}_k]) \enspace .
  \end{aligned}
\end{gathered}
\end{equation}
An AWM can be trained in a similar manner to a Markovian model or an HWM, by using the loss ${\ell^{g}(\tau;\vpsi) = \sum_{t=1}^{H-1} \| s_{t+1}^\tau - g_\vpsi(s_1, a_{1:t}^\tau) \|^2}$,
computed likewise using states from an episode $\tau$.

We will now show that, despite the seemingly counter-intuitive removal of state inputs, AWMs naturally connect the theory of policy gradients and sequence modeling with neural networks and provide a framework in which transformers can give high-quality policy gradients.

\subsection{Markovian Models are RNN Actions World Models}
The intuition that policy gradients computed through an unrolled Markovian model prescribed by \eqref{eq:mdpobj} resembles the process of backpropagating through a recurrent neural network has been pointed out in multiple prior works~\cite{HeessWSLET15, metz2021gradients, SuhSZT22, zhang2023modelbased}. Yet, a formal account of this relationship has never been shown. 
Via the framework provided by AWMs, the next proposition shows how recurrent neural networks and policy gradients are connected, offering a first glimpse at the connection between neural network architectures, AWMs and policy gradient computation in MDPs.
\begin{restatable}{proposition}{rnnEquivalence}
Let 
$f \text{-}\texttt{RNN}$ 
be a recurrent network with its recurrent cell being the dynamics $f$ of the MDP $\mathcal M$, and $g_{ f \text{-}\texttt{RNN}}$ denote an AWM instantiated with $f \text{-}\texttt{RNN}$. Then,
\begin{align*}
    \nabla_\vtheta J^{g_{f \text{-}\texttt{RNN}}}(\vtheta; H) = \nabla_\vtheta J^f(\vtheta; H).
\end{align*}
\label{pro:rnn_equivalence}
\end{restatable}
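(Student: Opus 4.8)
The plan is to show that, once $g$ is taken to be $g_{f\text{-}\texttt{RNN}}$, the constrained problems \eqref{eq:mdpobj} and \eqref{eq:awmobj} induce \emph{identical computational graphs}: the same nodes, the same edges, and the same placement of stop-gradient operators. Equality of the two objectives as functions of $\vtheta$ then follows, and—more to the point—equality of their backpropagation graphs forces the gradients to coincide. First I would make the instantiation precise. An $f\text{-}\texttt{RNN}$ is the recurrent network whose hidden state lives in $\mathcal S$, whose initial hidden state is $s_1$, whose per-step input is the action $a_t$, whose recurrent cell is the MDP dynamics $f$ (so $h_{t+1} = f(h_t, a_t)$), and whose emitted output at each step equals its hidden state. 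Unrolling it according to the AWM definition in \eqref{eq:awmobj} gives
\[
g_{f\text{-}\texttt{RNN}}(s_1, a_{1:t}) = f\bigl(f(\dots f(s_1, a_1)\dots, a_{t-1}), a_t\bigr),
\]
the $t$-fold composition of $f$ along the action sequence, with hidden states shared across the successive predictions $\hat s_2, \dots, \hat s_H$.

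Next I would run an induction on $t$ to show the two forward rollouts coincide. The base case is $\hat s_1 = s_1$, identical in both problems, which yields $a_1 = \pi_\vtheta(\mathrm{sg}[\hat s_1]) = \pi_\vtheta(\mathrm{sg}[s_1])$. For the inductive step, assuming $\hat s_k = s_k$ and matching actions for all $k \le t$, the AWM update gives $\hat s_{t+1} = g_{f\text{-}\texttt{RNN}}(s_1, a_{1:t}) = f(\hat s_t, a_t) = f(s_t, a_t) = s_{t+1}$, where the middle equality is exactly the recurrent structure of the $f\text{-}\texttt{RNN}$ and the last is the MDP dynamics; hence $a_{t+1}$ also agrees. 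Consequently $\hat s_t = s_t$ for every $t$, the two reward sums are equal term by term, and $J^{g_{f\text{-}\texttt{RNN}}}(\vtheta; H) = J^f(\vtheta; H)$ for all $\vtheta$.

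For the gradients I would argue directly that the backpropagation graphs are the same. In both problems the dependence of $J$ on $\vtheta$ flows only through the actions $a_k = \pi_\vtheta(\mathrm{sg}[\cdot])$: the state input is frozen by $\mathrm{sg}$, so each $a_k$ contributes the identical Jacobian $\partial a_k/\partial \vtheta = \nabla_\vtheta \pi_\vtheta$ evaluated at the common point $s_k = \hat s_k$. The sensitivity of a later state to an earlier action, $\partial s_t/\partial a_k$, is in the MDP the product of cell Jacobians of $f$ along the chain $a_k \to s_{k+1} \to \dots \to s_t$, which is precisely the internal gradient path by which $a_k$ influences $\hat s_t$ inside $g_{f\text{-}\texttt{RNN}}$. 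Applying the multivariate chain rule then reproduces each term, so $\nabla_\vtheta J^{g_{f\text{-}\texttt{RNN}}}(\vtheta; H) = \nabla_\vtheta J^f(\vtheta; H)$.

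The main obstacle I anticipate is not any hard estimate but the bookkeeping of the identification itself: one must pin down exactly what ``AWM instantiated with an $f\text{-}\texttt{RNN}$'' means at the level of the unrolled graph—in particular that the RNN shares its hidden states across the predictions $\hat s_{t+1} = g(s_1, a_{1:t})$ rather than recomputing each from scratch—and must check that the $\mathrm{sg}[\cdot]$ operators sit in corresponding positions in the two formulations, so that the graphs coincide edge for edge. This is what upgrades equality of values to equality of gradients despite the stop-gradient making $\nabla_\vtheta J$ a surrogate rather than the true gradient of the objective value; once the graphs are matched, the two surrogate gradients are computed by the same backpropagation and must agree.
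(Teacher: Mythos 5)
Your proposal is correct and follows essentially the same route as the paper's proof: both identify $g_{f\text{-}\texttt{RNN}}(s_1,a_{1:t})$ as the iterated composition $f(\dots f(s_1,a_1)\dots,a_t)$ and then observe that the chain-rule expansion of $\nabla_\vtheta J^{g_{f\text{-}\texttt{RNN}}}$ yields exactly the terms $\frac{\partial r(s_t)}{\partial s_t}\frac{\partial s_{k+1}}{\partial a_k}\bigl(\prod_{i=k+1}^{t-1}\frac{\partial s_{i+1}}{\partial s_i}\bigr)\frac{\partial \pi_\vtheta(s_k)}{\partial \vtheta}$ appearing in $\nabla_\vtheta J^f$. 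Your explicit forward induction showing $\hat s_t = s_t$ and your check that the stop-gradient operators sit in corresponding positions are details the paper leaves implicit (it uses the same symbol $s_t$ for both rollouts), so your write-up is a slightly more careful version of the identical argument.
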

\vspace{-2em}
The above proposition tells us that the policy gradient computed through a Markovian model is, in fact, equivalent to the one computed through an AWM when instantiating $g$ as a recurrent neural network with a specific recurrent cell. Crucially, this not only provides grounding for gradient estimation with AWMs but also solidifies a fundamental fact that will be analyzed in-depth in this section: policy gradient computation by differentiating through unrolled Markovian models can be understood to be fundamentally ill-behaved due to its correspondence to an RNN structure. 

\subsection{Theoretical Properties of Actions World Models}
Through the concept of Actions World Models, we have established a direct connection between deep sequence models and policy gradient computation. We will now exploit this connection to characterize the asymptotic behavior of the policy gradient depending on the underlying neural network architecture employed as an AWM. To do so, we leverage the following theorem.
\begin{restatable}{theorem}{generalBound}
Let $r$ be the $L_r$-Lipschitz reward function from a Markov Decision Process $\mathcal M$, $\Pi_\vtheta$ a parametric space of differentiable deterministic $L_\pi$-policies. Given $\pi_\vtheta \in \Pi_\vtheta$, the norm of the policy gradient $\nabla_\vtheta J^g(\vtheta; H)$ of $\pi_\vtheta$ under an action world model $g$ as a function of the horizon $H$ can upper bounded as:
\label{thm:general-bound}
\begin{align*}
    \left \| \nabla_\vtheta J^g(\vtheta; H) \right \| \leq L_r L_\pi  \sum_{t=1}^H \sum_{k=1}^{t-1} \left\| \frac{\partial g(s_1, a_{1:t-1})}{\partial a_k} \right\|.
\end{align*}
\end{restatable}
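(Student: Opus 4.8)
The plan is to differentiate the objective $J^g(\vtheta; H) = \sum_{t=1}^H r(\hat s_t)$ directly by the chain rule and then bound each resulting term using submultiplicativity of the (operator) norm together with the two Lipschitz assumptions. The crucial structural observation — and the entire reason an AWM escapes the exponential blow-up of Theorem~\ref{thm:hist-explode} — is that, because of the stop-gradient operator, the predicted state $\hat s_t = g(s_1, a_{1:t-1})$ depends on $\vtheta$ \emph{only} through the actions $a_{1:t-1}$, and each action $a_k = \pi_\vtheta(\mathrm{sg}[\hat s_k])$ depends on $\vtheta$ only through the explicit parametric argument of $\pi_\vtheta$, never recursively through the states it conditions on. There is therefore no $\partial \hat s_t / \partial \hat s_i$ factor to accumulate, in contrast to the HWM case where exactly such recursive factors generate the $L_s^H$ term.

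Concretely, I would first write
\[
\nabla_\vtheta J^g(\vtheta; H) = \sum_{t=1}^H \frac{\partial r(\hat s_t)}{\partial \hat s_t}\,\frac{\partial \hat s_t}{\partial \vtheta},
\]
and then expand $\partial \hat s_t / \partial \vtheta$ through the AWM. Since $s_1$ is fixed and $\hat s_t = g(s_1, a_{1:t-1})$, the chain rule gives
\[
\frac{\partial \hat s_t}{\partial \vtheta} = \sum_{k=1}^{t-1} \frac{\partial g(s_1, a_{1:t-1})}{\partial a_k}\,\frac{\partial a_k}{\partial \vtheta},
\]
where, by the stop-gradient, $\partial a_k / \partial \vtheta$ is simply the Jacobian of $\pi_\vtheta$ with respect to its parameters evaluated at the (now constant) state $\hat s_k$. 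Substituting this into the first display yields a clean double sum over $t$ and $k$ of products of three Jacobians, with no nested recursion over states.

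Finally, I would take norms, apply the triangle inequality over both sums, and use submultiplicativity to split each summand into the product $\|\partial r(\hat s_t) / \partial \hat s_t\|\cdot\|\partial g(s_1, a_{1:t-1}) / \partial a_k\|\cdot\|\partial a_k/\partial\vtheta\|$. The $L_r$-Lipschitzness of $r$ bounds the first factor by $L_r$, and the $L_\pi$-policy assumption bounds the third by $L_\pi$; both constants factor out of the sums, leaving exactly $L_r L_\pi \sum_{t=1}^H \sum_{k=1}^{t-1} \|\partial g(s_1,a_{1:t-1})/\partial a_k\|$. The only real care needed is the bookkeeping of the chain rule — in particular, rigorously justifying via the stop-gradient semantics that the state arguments of $g$ and of $\pi_\vtheta$ contribute no additional gradient paths — since this is precisely the step that collapses the HWM's recursive state-to-state terms and gives the AWM bound its benign, architecture-driven form.
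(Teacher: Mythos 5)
Your proposal is correct and follows essentially the same route as the paper's proof: expand $\nabla_\vtheta J^g$ via the chain rule into the double sum $\sum_{t=1}^H \frac{\partial r(\hat s_t)}{\partial \hat s_t} \sum_{k=1}^{t-1} \frac{\partial g(s_1,a_{1:t-1})}{\partial a_k} \frac{\partial \pi_\vtheta(\hat s_k)}{\partial \vtheta}$ (with the stop-gradient guaranteeing no recursive state-to-state terms), then apply the triangle inequality, submultiplicativity, and the Lipschitz bounds $L_r$ and $L_\pi$. Your write-up is in fact slightly more careful than the paper's one-line proof, since you explicitly justify why $\partial a_k/\partial\vtheta$ reduces to the parametric Jacobian of $\pi_\vtheta$ and apply the norm inequalities in the rigorous order.
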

This theorem holds for any differentiable AWM $g$, and establishes a worst-case relationship between the Jacobian of the AWM w.r.t. its inputs (i.e., actions) and the policy gradient.
Notably, the result implies that the policy gradient does not explode if the Jacobian of the AWM does not explode. In contrast, this may not be true even for well-behaved HWM.

We will now leverage the generality of the previous result to characterize the asymptotic behavior of the policy gradient computed using different neural network architectures for an AWM. First, let us consider an AWM instantiated with a simple recurrent neural network: 
\begin{flalign} \label{eq:awmrnn}
 \text{(}g_{\texttt{RNN}}\text{)} && x_{t+1} = \sigma(W_x x_{t}) + W_a a_{t} + b ; && & \hat{s}_{t+1} = W_o x_{t+1}, &&
\end{flalign}
 where $\sigma$ is an activation function with gradient norm bounded by $\left \| diag(\sigma'(x))\right \| \leq \frac{1}{\beta}$ for some constant $\beta$.
 Then, the following result holds.
\begin{restatable}{corollary}{rnnBound}
Let $g_{\texttt{RNN}}$ be an Actions World Model instantiated with a recurrent neural network as in Equation~\ref{eq:awmrnn} and $\eta = \| W_x^T\| \frac{1}{\beta}$. The asymptotic behavior of the norm of the policy gradient $\nabla_\vtheta J^{g_{\texttt{RNN}}} (\vtheta;H)$ as a function of the horizon H can be described as:
    \begin{align*}
        \left \|\nabla_{\vtheta} J^{g_\texttt{RNN}} (\vtheta; H) \right \|= O\left ( \eta^H\right).
    \end{align*}
    \label{thm:rnn}
\end{restatable}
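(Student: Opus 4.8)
The plan is to invoke Theorem~\ref{thm:general-bound}, which already reduces the policy-gradient norm to a double sum of AWM Jacobian norms $\left\| \partial g(s_1, a_{1:t-1}) / \partial a_k \right\|$; the whole task then becomes bounding these Jacobians for the specific recurrent architecture of Equation~\ref{eq:awmrnn}. Since $g$ outputs $\hat{s}_t = W_o x_t$, I would first write $\partial \hat{s}_t / \partial a_k = W_o\,(\partial x_t / \partial a_k)$ and unroll the hidden-state recurrence $x_{t+1} = \sigma(W_x x_t) + W_a a_t + b$ via the chain rule. The only direct dependence of $x_{k+1}$ on $a_k$ contributes the factor $W_a$, while each additional time step multiplies by the recurrent Jacobian $\partial x_{j+1}/\partial x_j = \mathrm{diag}(\sigma'(W_x x_j))\, W_x$. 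This yields the telescoping product
\begin{equation*}
\frac{\partial x_t}{\partial a_k} = \left( \prod_{j=k+1}^{t-1} \mathrm{diag}(\sigma'(W_x x_j))\, W_x \right) W_a .
\end{equation*}

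Next I would bound the norm. Using submultiplicativity of the operator norm, the hypothesis $\|\mathrm{diag}(\sigma'(\cdot))\| \le 1/\beta$, and the fact that the spectral norm is invariant under transpose so that $\|W_x\| = \|W_x^T\|$, each of the $t-1-k$ recurrent factors contributes $\|W_x\|/\beta = \eta$. Hence
\begin{equation*}
\left\| \frac{\partial \hat{s}_t}{\partial a_k} \right\| \le \|W_o\|\,\|W_a\|\, \eta^{\,t-1-k} =: C\, \eta^{\,t-1-k},
\end{equation*}
with $C = \|W_o\|\,\|W_a\|$ a horizon-independent constant. This is exactly the step where the exponential dependence on the horizon is born, mirroring the classical exploding/vanishing-gradient analysis for RNNs of \citet{PascanuMB13}, and it is the main point of care: one must correctly count how many recurrent Jacobian factors appear between action index $k$ and output index $t$, since an off-by-one there would corrupt the final exponent.

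Finally I would substitute back into Theorem~\ref{thm:general-bound} and evaluate the resulting geometric sums. The inner sum becomes $\sum_{k=1}^{t-1} \eta^{\,t-1-k} = \sum_{j=0}^{t-2} \eta^{\,j} = (\eta^{t-1}-1)/(\eta-1)$ for $\eta \neq 1$, and summing the dominant term over $t = 1, \dots, H$ gives a second geometric series whose leading behavior is $\eta^{H}$ when $\eta > 1$. Collecting the prefactor $L_r L_\pi C$, which is independent of $H$, then yields $\|\nabla_\vtheta J^{g_\texttt{RNN}}(\vtheta; H)\| = O(\eta^H)$, completing the argument. The boundary cases $\eta = 1$ (polynomial $O(H^2)$ growth) and $\eta < 1$ (bounded) fall out of the same computation and are consistent with the exploding-gradient regime $\eta > 1$ that the statement is meant to emphasize, in direct contrast to the bounded-Jacobian behavior one would expect from a transformer AWM.
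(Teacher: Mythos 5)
Your proposal is correct and follows essentially the same route as the paper's proof: unroll the recurrence so that $\partial \hat{s}_t/\partial a_k$ is $W_o$ times a product of $t-k-1$ recurrent Jacobians, bound each factor by $\eta = \|W_x^T\|\tfrac{1}{\beta}$ via submultiplicativity and the activation bound, and substitute into Theorem~\ref{thm:general-bound} (your exact geometric-sum evaluation is in fact slightly tighter than the paper's coarser $O(H^2\eta^H) = O(\eta^H)$ step, and your exponent count $\eta^{t-k-1}$ matches the paper's). One peripheral slip in your closing remark: for $\eta < 1$ the double sum is $\Theta(H)$, since each $t$ contributes at least the $k = t-1$ term equal to $1$, so the resulting bound is $O(H)$ rather than bounded --- though this does not affect the corollary itself, which concerns the exploding regime $\eta > 1$.
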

\vspace{-2em}
Corollary~\ref{thm:rnn} shows that, in the worst case, the policy gradient computed with an AWM instantiated with an RNN backbone can explode exponentially fast with respect to the problem horizon if the spectral radius of $W_x$ exceeds $\beta$. Just like Theorem~\ref{thm:hist-explode}, the above bound can be written as a lower bound when all gradients are positive scalars, therefore it is tight.
This result will allow us to compare the gradient provided by an RNN AWM with the one provided by a transformer AWM, but it is also connected with gradient computation with Markovian models: as seen in Proposition~\ref{pro:rnn_equivalence}, using a Markovian model to compute the policy gradient, either given or learned, implies an RNN-structure for the AWM. 
Theorem~\ref{thm:rnn} is a consequence of foundational theory of RNNs~\citep{BengioSF94,PascanuMB13}, thus formalizing the intuition that the recurrent nature of policy gradients through Markovian models make both exploiting differentiable simulators $f$ and learned models $\hat f_\vpsi$  difficult~\citep{metz2021gradients, HeessWSLET15}.

Let us now apply Theorem~\ref{thm:general-bound} to a layer of self-attention AWM~\citep{Vaswani2017}. Given $Q \in \mathbb{R}^{1\times d_z}$, $K \in \mathbb{R}^{n\times d_z}$, $V \in \mathbb{R}^{n \times d_o}$, we define $\text{Attention}(Q, K, V) := \text{softmax}(QK^T)V$. Using a set of weight matrices, a self-attention AWM predicts the next state as follows\footnote{Without loss of generality, the dependency on the initial state is not included in the self-attention layer.}:
\begin{flalign} \label{eq:att}
  \text{(}g_{\texttt{ATT}}\text{)} && \hat{s}_{t+1} =  \text{Attention}(a_{t}W_q^T, a_{1:t}W_k^T, a_{1:t}W_v^T) \enspace . &&
\end{flalign}
We can now characterize the behavior of the policy gradient computed with a self-attention AWM.
\begin{restatable}{corollary}{attBound}
Let $g_{\texttt{ATT}}$ be an attention-based Actions World Model instantiated with self-attention as in \eqref{eq:att}.
     The asymptotic behavior of the norm of the policy gradient $\nabla_{\vtheta} J^{g_\texttt{ATT}} (\vtheta; H)$ as a function of the horizon $H$ can be described as:
    \begin{align*}
        \left \|\nabla_{\vtheta} J^{g_\texttt{ATT}} (\vtheta; H) \right \| = O\left ( H^3\right).
    \end{align*}
\end{restatable}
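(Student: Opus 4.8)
The plan is to specialize Theorem~\ref{thm:general-bound} to the self-attention AWM of \eqref{eq:att} and then bound the per-step Jacobian $\left\|\partial g_{\texttt{ATT}}(s_1, a_{1:t-1})/\partial a_k\right\|$ as a function of $H$. Theorem~\ref{thm:general-bound} already reduces $\left\|\nabla_\vtheta J^{g_{\texttt{ATT}}}(\vtheta;H)\right\|$ to the double sum $L_r L_\pi \sum_{t=1}^H \sum_{k=1}^{t-1} \left\|\partial g_{\texttt{ATT}}/\partial a_k\right\|$, whose index set already contributes a factor of $O(H^2)$. Since $L_r$, $L_\pi$, and the fixed weight matrices $W_q, W_k, W_v$ are all treated as $H$-independent constants, the whole corollary reduces to showing that a single attention Jacobian grows at most linearly in $H$, and then counting the number of such terms.

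First I would write the attention output explicitly. With query $a_{t-1}$ and keys/values $a_{1:t-1}$, we have $\hat{s}_t = g_{\texttt{ATT}}(s_1, a_{1:t-1}) = \sum_{i=1}^{t-1} p_i\, a_i W_v^T$, where $p = \softmax(\ell)$ and the logit vector has entries $\ell_i = a_{t-1} W_q^T W_k a_i^T$. Differentiating with respect to a single action $a_k$ (for $k \le t-1$) via the product rule splits the Jacobian into an \emph{attention-weight term} $\sum_{i=1}^{t-1} (\partial p_i/\partial a_k)\, a_i W_v^T$ and a \emph{value term} $p_k W_v^T$.

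Then I would bound the two terms separately. The value term satisfies $\left\|p_k W_v^T\right\| \le \left\|W_v\right\| = O(1)$ since $0 \le p_k \le 1$. For the attention-weight term I would chain through the logits using the softmax Jacobian $\partial p_i/\partial \ell_j = p_i(\delta_{ij} - p_j)$, so that $\partial p_i/\partial a_k = \sum_{j=1}^{t-1} p_i(\delta_{ij}-p_j)\,\partial \ell_j/\partial a_k$. Each logit gradient $\partial \ell_j/\partial a_k$ equals $a_{t-1}W_q^T W_k$ when $j=k\neq t-1$ (differentiating a key), $a_j W_k^T W_q$ when $k=t-1$ (differentiating the query), or the sum of both when $j=k=t-1$; in every case its norm is bounded by a constant $C_\ell$ depending only on the weight matrices and on a uniform bound on the action magnitudes. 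Because $0\le p_i\le 1$, each summand of the attention-weight term then has norm $O(1)$, and there are at most $t-1 \le H$ keys, giving $\left\|\partial g_{\texttt{ATT}}/\partial a_k\right\| = O(H)$ uniformly over $k$ and $t$. Substituting this per-step bound into the double sum yields $\left\|\nabla_\vtheta J^{g_{\texttt{ATT}}}(\vtheta;H)\right\| \le L_r L_\pi \sum_{t=1}^H\sum_{k=1}^{t-1} O(H) = O(H^3)$.

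I expect the main obstacle to be the careful handling of the softmax Jacobian together with the asymmetric way the bilinear logits depend on the query action $a_{t-1}$ versus the key actions $a_i$: this is what produces the sum over $O(H)$ keys and hence the extra linear factor beyond the $O(H^2)$ from the index set. A secondary subtlety is that treating the logit gradients as $O(1)$ implicitly requires the action norms to be uniformly bounded in $H$ (so that the softmax scores do not themselves scale with the horizon); I would state this as a standing assumption. The qualitative payoff is that this bound is \emph{polynomial} in $H$, in sharp contrast to the exponential bound $O(\eta^H)$ of Corollary~\ref{thm:rnn} for the RNN AWM.
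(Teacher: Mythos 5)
Your proposal is correct and follows essentially the same route as the paper's proof: specialize Theorem~\ref{thm:general-bound}, split the attention Jacobian into a value term $c_k W_v^T$ and a softmax-weight term, bound each summand by a constant under the bounded-action assumption $\|a_i\| \le \alpha$, and count $O(H)$ keys to get a per-step Jacobian of $O(H)$ and hence $O(H^3)$ overall. If anything you are more careful than the paper, which silently drops the logit-gradient factors $\partial \ell_j / \partial a_k$ and never treats the query case $k = t-1$; just note that in that query case every logit depends on $a_k$, so to keep the per-step bound uniformly $O(H)$ you should invoke $\sum_j p_j = 1$ (giving $\|\partial p_i/\partial a_{t-1}\| \le 2 p_i C_\ell$) rather than only $p_i \le 1$ -- though even the cruder accounting still yields $O(H^3)$ since that case occurs once per $t$.
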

This result shows that the norm of the policy gradient computed through a self-attention AWM has a worst-case polynomial dependency on the horizon instead of an exponential one shown in Theorem~\ref{thm:hist-explode} and Corollary~\ref{thm:rnn}. Since the output of a transformer model only depends on the sequence length through the attention mechanism, this characterization captures the gradient dynamics of the transformer model itself. It shows that, by eliminating all circuitous gradient paths, AWMs provide a framework for transformers to translate their favorable gradient properties towards better policy gradients, unlike state-conditioned models.

\begin{figure*}[t]
    \centering
    \subfloat[One-bounce environment overview.]
    {{\includegraphics[width=3.7cm]{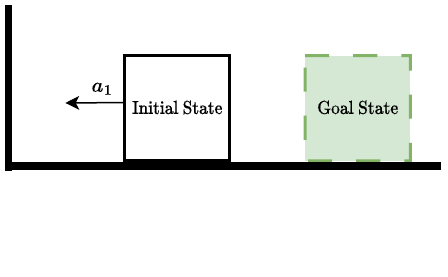}\label{fig:one-bounce-vis}}}
    \hfill
    \subfloat[Example trajectory with non-differentiable point at $t\approx 0.25$.]{\includegraphics[height=2.8cm]{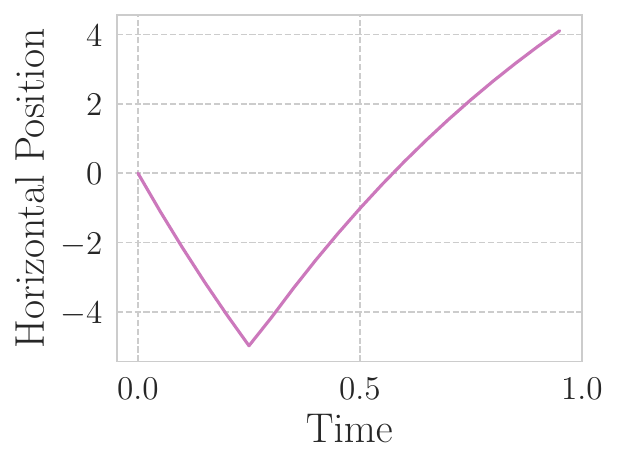}\label{fig:one-bounce-trajectory}}
    \hfill
    \subfloat[Final return with respect to the initial action for different models.]{\includegraphics[height=2.8cm]{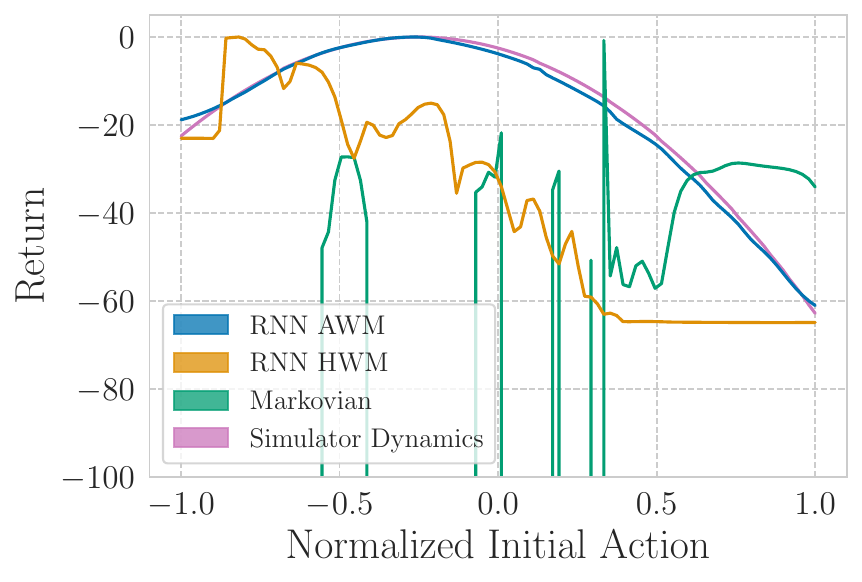}\label{fig:one-bounce-return}}
    \vspace{-0.5em}
    \caption{\footnotesize \textbf{AWMs ignore non-differentiable points in the state space.} (a) After the block is pushed with some initial action, it bounces off the wall, instantaneously reversing its velocity. (b) Visualization of the point of non-differentiability in the state space.  (c) Learning a Markovian model or a HWM causes catastrophic compounding errors, but an AWM can still accurately model the final reward when varying the initial action. Learned dynamics are trained offline on a dataset collected using random actions. }
    \label{fig:one-bounce}
    \vspace{-1.5em}
\end{figure*}

\section{Experiments} \label{sec:experiments}
Backpropagation through Actions World Models produces policy gradients with no circuitous gradient paths. Consequently, the longest gradient path from an action to any reward strictly depends on the AWM internals, while the path through state-based world models scales with the horizon, regardless of the model architecture. This section empirically explores how this phenomenon benefits the policy optimization process (see Appendix \ref{app:hparams} for hyperparameters of all experiments).

\subsection{Empirical Properties of Action World Models}

By removing the need to explicitly model the state dynamics auto-regressively, AWMs can internally build a more appropriate world model for policy optimization. Using environments inspired by real-world physical systems, we highlight two particular properties of AWMs: skipping points of complex state-space dynamics and smoothing out ill-behaved policy optimization landscapes.

\textbf{AWMs overcome non-differentiable dynamics}~~Unlike state-based world models, AWMs directly map actions to future states without explicitly predicting intermediate states. This mapping allows AWMs to skip points of potentially complex dynamics in the state space, which are not always necessary for policy optimization. For example, consider points of non-differentiability (e.g., contact points) in real-world systems (e.g., physical systems). These points make directly using the simulator dynamics for BPO impossible and can cause catastrophic compounding errors for state-conditioned world models. However, these points often exist even if the dependency of rewards on actions is smooth and well-behaved everywhere. In such cases, AWMs can be used to side-step the non-differentiability of the state dynamics. To illustrate this point, we examine a \emph{one-bounce} environment, shown in Figure \ref{fig:one-bounce}. In this task, an agent must push a block towards a wall with some initial action, such that the block bounces off the wall and ends up at some predetermined goal state after $H$ steps. A single terminal reward is given at $t=H$ measuring the distance of the block to the goal state. Even though the state trajectory is non-differentiable due to the wall bounce, the final state of the block is actually well-behaved with respect to initial actions (Figure \ref{fig:one-bounce-trajectory} and Figure \ref{fig:one-bounce-return}). Consequently, we show in Figure \ref{fig:one-bounce-return}
that AWMs can accurately predict the final reward while an unrolled Markovian or HWM cannot. Notably, the resulting AWM thus allows to easily use backpropagation-based policy optimization even in the presence of an underlying non-differentiable environment dynamics.

\textbf{AWMs overcome chaotic dynamics}~~
In many real-world systems (e.g., robots), the chaotic dynamics of an environment can create a remarkably complex mapping from actions to rewards over time or a highly non-smooth return landscape~\citep{rahn2023policy}.
In such systems, even when a differentiable simulator is available, it gives ill-behaved gradients and generates an optimization landscape that is hard to navigate~\citep{SuhSZT22}.
We now show that transformer AWMs are able to take full advantage of the inductive biases of the underlying neural networks, naturally generating policy optimization landscapes that are easy to navigate. This happens regardless of the complexity of the real dynamics while preserving the global problem structure and allowing good policies to be found more easily.
To demonstrate this, we use a prototypical chaotic system, a double-pendulum task depicted in Figure~\ref{fig:double-pend-vis}.
In this environment, the agent's initial action determines the initial angular position of the inner pendulum. Afterwards, the system is rolled out for $H$ steps. The objective is to get the pendulums to be in some predefined goal state after $H$ steps. A terminal reward is given measuring the distance between the final observed state and the desired goal state. We illustrate in Figure~\ref{fig:double-pend-grads} and Figure~\ref{fig:double-pend-returns} the chaotic nature of the task, showing respectively that the norm of the true gradient of the return with respect to the initial action grows exponentially with the episode length $H$, and that the true return landscape is extremely difficult to navigate for long horizons. 
Figure~\ref{fig:double-pend-grads} shows that transformers provide stable gradients when used as AWMs, while the gradient provided by HWMs explodes similarly to the true gradients. 
AWMs generate a smooth and accurate approximation of the return landscape (Figure \ref{fig:double-pend-returns}), which gradient ascent can easily navigate, leading to better policies when used for backpropagation-based policy optimization compared to all the other world models~(Figure~\ref{fig:double-pend-rl}).

\begin{figure*}[t]
    \centering
    \subfloat[Double-pendulum.]
    {{\includegraphics[height=2.3cm]{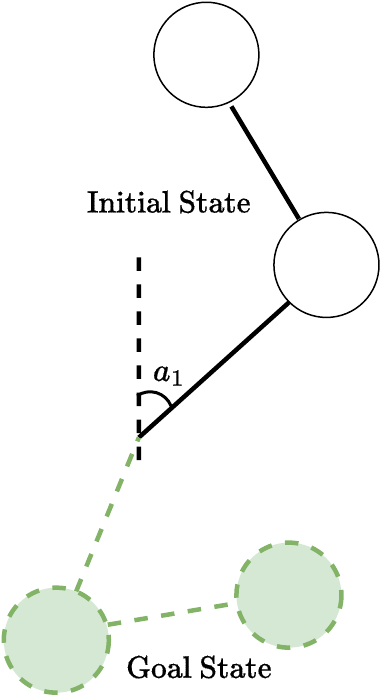}\label{fig:double-pend-vis}}}
    \hfill
    \subfloat[Gradient norms.]{\includegraphics[height=2.2cm]{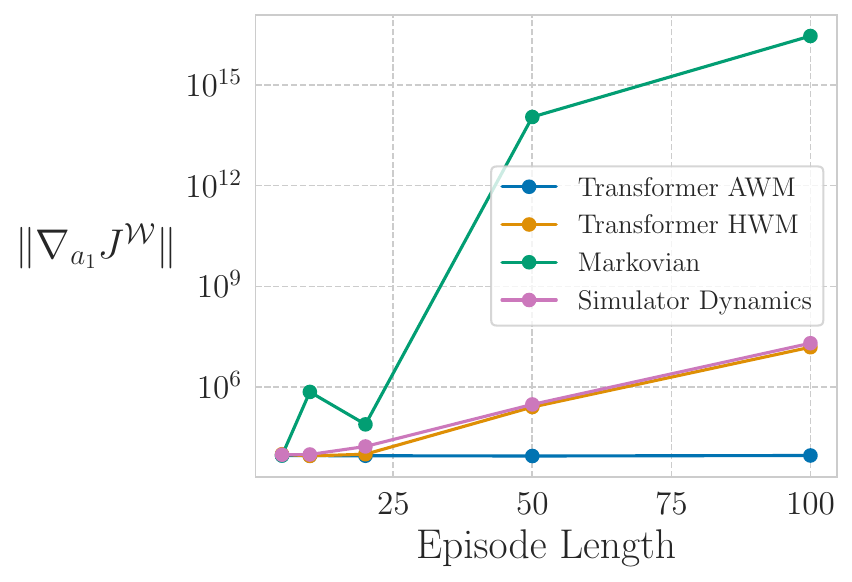}\label{fig:double-pend-grads}}
    \hfill
    \subfloat[Final returns with respect to the initial action for different models when $H=100$.
    ]{\includegraphics[height=2.2cm]{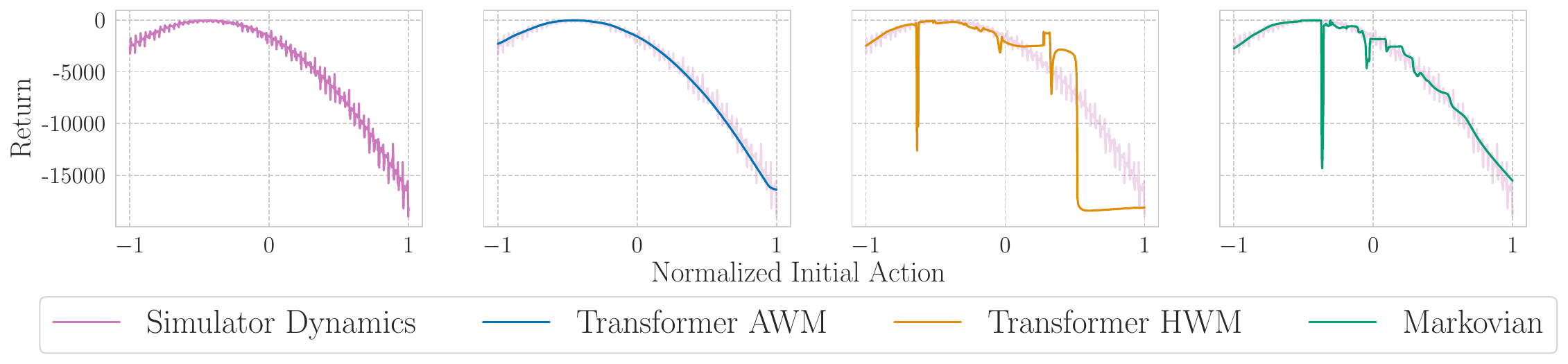}\label{fig:double-pend-returns}}
    \caption{\footnotesize \textbf{Transformer AWMs smooths out chaotic dynamics.} (a) A double-pendulum environment where an initial position must be chosen in order to achieve some pre-determined goal state after $H$ steps. Different transition models are learned on a data set of random trajectories. (b) The mean gradient norm of the final state with respect to the initial action for each model is computed over 50 different random actions for different horizons. (c) Final return according to different models with respect to different initial actions for $H=100$.}
    \label{fig:double-pend}
    \vspace{-1.5em}
\end{figure*}

\subsection{Benchmarking Credit Assignment with AWMs}
We now show that transformers employed as AWMs can give better policies in environments which require long-horizon planning. We focus on eight tasks from the Myriad testbed~\cite{howe_myriad}. Myriad provides continuous MDPs inspired by real-world problems \cite{lenhart} with configurable horizons. In Myriad, the agent requires better temporal credit assignment capabilities to solve problems with longer episode lengths, in contrast to other RL benchmarks (e.g., focused on simple robotics locomotion), which typically feature a short effective horizon~\cite{laidlaw2023bridging, ni2023transformers} regardless of the episode length.
Additional information about Myriad and its experiments are provided in Appendix~\ref{app:myriad_details}. All aggregate results use the IQM \cite{AgarwalSCCB21}.

\begin{figure*}[t]
\vspace{-2.em}
    \centering
    \subfloat[Final performances of BPO for different horizons.]
    {{\includegraphics[width=0.39\linewidth]{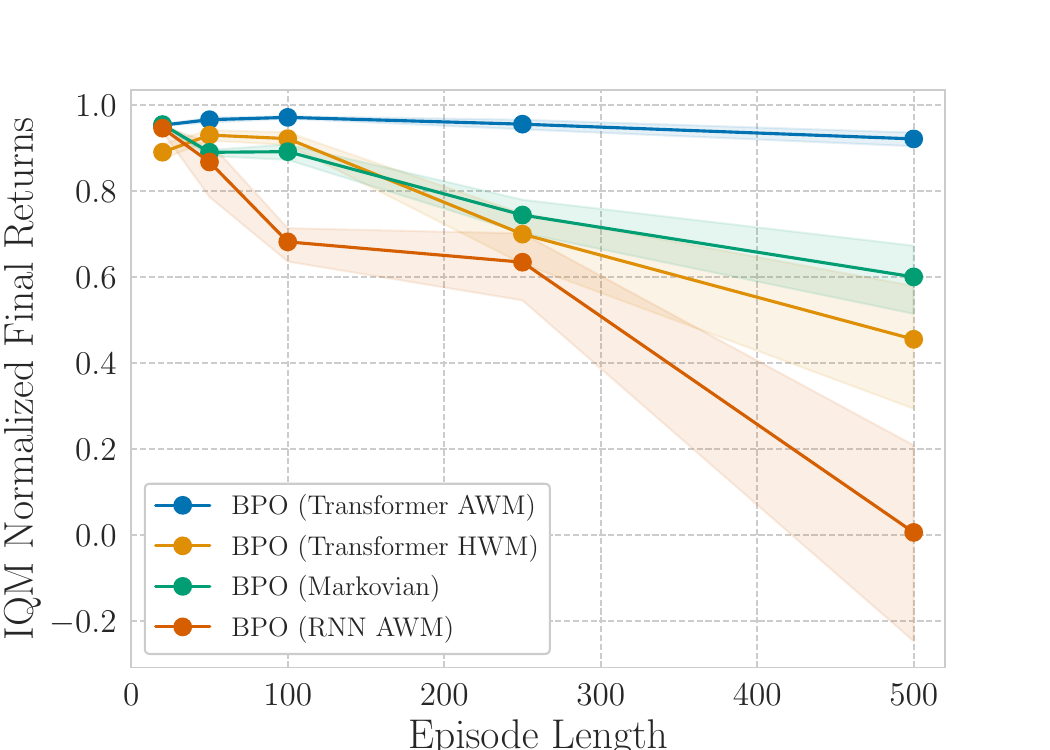}\label{fig:agg-myriad}}}
    \hfill
    \subfloat[Learning curves compared to other baselines.]{{\includegraphics[width=0.58\linewidth]{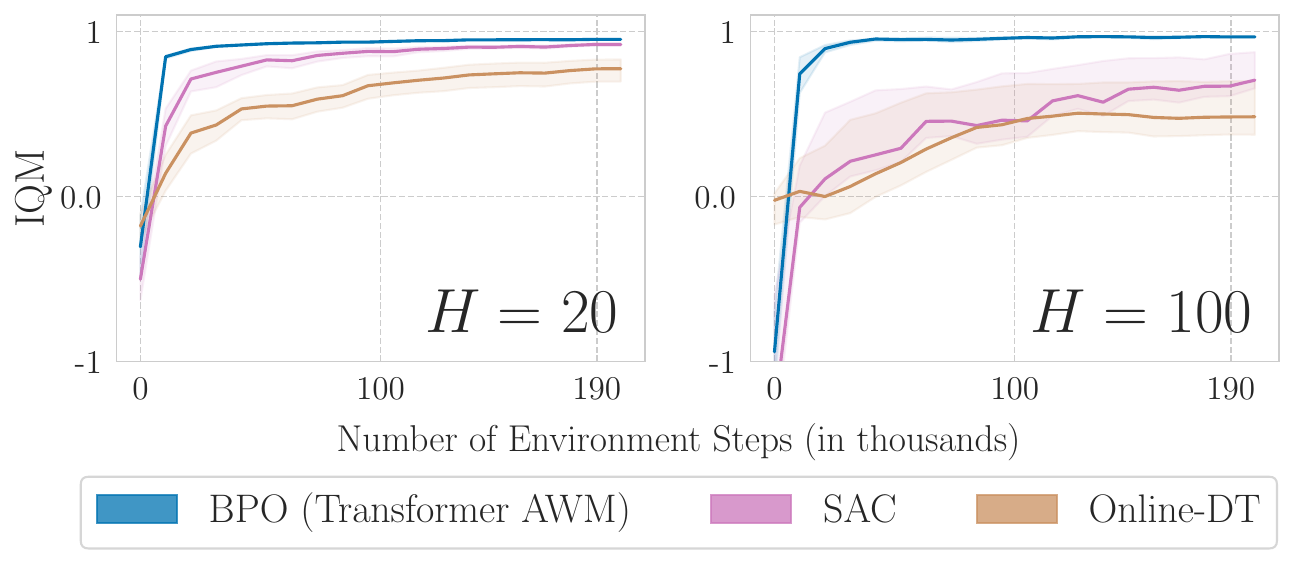}\label{fig:sample-efficiency}}}
     \caption{\footnotesize \textbf{Policy optimization with transformer AWMs gives better policies for long horizons.} (a) Final performance of BPO with different world models on Myriad (10 seeds $\pm$ 95\% C.I.). (b) Learning curves of BPO through a transformer AWM, a SAC agent, and an Online-DT agent on 20 and 100 length horizons (10 seeds $\pm$ 95\% C.I.).}
\end{figure*}
Figure~\ref{fig:agg-myriad} shows the final performance of various world models used in BPO with increasing episode lengths.  The transformer AWM is the only model to perform reliably well for increasingly long horizons, while other methods struggle when the horizon exceeds 100 steps. Indeed, when used as AWMs, transformer world models give better policy gradients than standard Markovian models and 
RNN world models. However, when transformer world models are conditioned on states, their long-term policy gradients are no better than the gradients induced by Markovian models as measured by performance in Figure~\ref{fig:agg-myriad}. These results corroborate our previous findings: transformer HWMs, RNN AWMs, and Markovian  models all generate long policy gradient paths, which makes long-term credit assignment difficult.

We also compare our method with other relevant baselines in Figure \ref{fig:sample-efficiency}, namely model-free Soft Actor-Critic (SAC)~\cite{haarnoja2018sac} and the recently popularized Online Decision Transformer (Online-DT)~\cite{zheng2022online,schmidhuber2019reinforcement,srivastava2019training}. In addition to BPO with transformer AWMs scaling better with the problem horizon, BPO is also significantly more sample efficient than the two baselines.

\vspace{-1em}
\section{Related Work}
\textbf{Backpropagation-based Policy Optimization}~~
The concept of learning a policy by differentiating through the environment dynamics has been a foundational aspect of early RL methods~\citep{werbos1974beyond,schmidhuber1990making}. Estimators for such a policy gradient have been referred to with different names, including pathwise derivatives~\citep{ClaveraFA20,hafner2022mastering,hafner2023mastering} and value gradients~\citep{fairbank2014value,HeessWSLET15,pmlr-v144-amos21a}.
AWMs connect this literature to the one on sequence modeling with neural networks, with a focus on gradient propagation~\citep{Hochreiter1991UntersuchungenZD,BengioSF94,hochreiter1998vanishing,PascanuMB13} and on architectures that allow for improved credit assignment~\citep{Hochreiter1997,Vaswani2017,KergKGGBL20}. Finally, our work can be seen as an instance of gradient-aware model-based RL~\citep{DOroModel2020,DOroMage20,Abachi2020PolicyAwareML}, in which we modify the architecture of a model of the dynamics to obtain better policy gradients~\citep{ma2021longterm}.


\textbf{Multi-step state prediction with actions}~~
Prior works used multi-step models related to AWMs, analyzing their role as partial models~\citep{rezende_causallycorrect}, in the context of tree search~\citep{schrittwieser_muzero}, or training them with a multi-step latent prediction loss~\citep{gregor2019, schwarzer_spr}, without however ever analyzing their gradient properties. Notably, models used in these prior works are rarely shown to work beyond a few dozen steps of unrolling or imagination, while AWMs demonstrably scale favorably for horizons well beyond established limitations. 

\textbf{Sequence models in RL}~~In partially observable MDPs, sequence models have been extensively used as history encoders to maximize returns with or without world models~\cite{hausknecht2015deep, ni2021recurrent, hafner2023mastering}. Other works have recently emerged treating MDPs as sequence modeling~\cite{ChenLRLGLASM21, zheng2022online, janner2021sequence}, in imitation learning or offline RL problem settings, based on return-conditioned models. 
Sequence models have also been used to reshape rewards in sparse reward settings~\cite{hung2019optimizing, arjona2019rudder, liu2019sequence}. In contrast to all of these, our framework uses an \textit{action-only} conditioned model to directly improve long-term policy gradients and naturally assigns credit by backpropagation as opposed to reward-shaping heuristics. Appendix~\ref{app:seq-rl} contains a more detailed discussion of how all these methods differ from ours.
\vspace{-1em}
\section{Conclusions and Limitations}
We presented a model-based policy optimization framework based on 
backpropagation through Actions World Models: models of the dynamics of the environment which only take an initial state and a sequence of actions as inputs.
We highlighted that, in contrast to the more common world models that compute the gradient by autoregressively unrolling their state predictions, AWMs do not suffer from the existence of circuitous gradient paths where the gradient can flow outside of the underlying network architecture.
We showed that their structure allows AWMs to fully exploit transformers' favorable properties, taking advantage, both in theory and in practice, of their ability to perform long-term temporal credit assignment by leveraging short gradient paths.

Theoretically, we demonstrated how transformer AWMs do not suffer from ill-behaved gradients coming from circuitous gradient paths which harm the quality of the gradients provided by more common Markovian models and HWMs.
Empirically, we showed that BPO using AWMs can be successful even when BPO using the underlying simulator fails, due to how AWMs leverage transformers' inductive biases to obtain an easier-to-navigate policy optimization landscape.
We demonstrated that policies trained with transformer AWMs outperform existing model-based and model-free approaches, including recently proposed algorithms based on transformers, in realistic domains requiring long-term planning.

While our theoretical analysis is developed within a deterministic setting, and our experiments are conducted in low-dimensional yet realistic domains, our framework establishes a new connection between gradient propagation techniques for sequence modeling and backpropagation-based policy gradients. Future work could focus on scaling our approach, making RL even more relevant to real-world applications where long-horizon capabilities are crucial. 

\subsubsection*{Acknowledgements and Disclosure of Funding}
The authors thank 
Nathan Rahn and Jürgen Schmidhuber for insightful discussions and useful suggestions on the paper, the Mila community for creating a stimulating research environment,
Digital Research Alliance of Canada
and Nvidia
for computational resources.
This work was partially supported by 
CIFAR,
Google, 
Facebook AI Chair, 
IVADO,
and Gruppo Ermenegildo Zegna.

{\small
\bibliography{iclr2024_conference}
\bibliographystyle{iclr2024_conference}
}

\clearpage

\addcontentsline{toc}{section}{Appendix} 
\part{Appendix} 
\parttoc 

\clearpage
\appendix



\section{Proofs}
\label{sec:proofs}

\histExplode*
\begin{proof}
    The policy gradient computed through equation \ref{eq:historyobj} can be expanded using the chain rule into the following expression:
    \begin{align*}
        \nabla_\vtheta J^h (\vtheta; H)&= \sum_{t=1}^H \frac{\partial r(\hat{s}_t)}{\partial \hat{s}_t} \sum_{k=1}^{t-1} \frac{dh(\hat{s}_{1:t-1}, a_{1:t-1})}{da_k} \frac{\partial \pi_\vtheta (\hat{s}_k)}{\partial \vtheta}\\
        \frac{dh(\hat{s}_{1:t-1}, a_{1:t-1})}{da_k}&= \frac{\partial h(\hat{s}_{1:t-1}, a_{1:t-1})}{\partial a_k} + \sum_{i=k+1}^{t-1} \frac{\partial h(\hat{s}_{1:t-1}, a_{1:t-1})}{\partial \hat{s}_i}\frac{dh(\hat{s}_{1:i-1}, a_{1:i-1})}{da_k}
    \end{align*}
    We begin by showing how the norm of the policy gradient depends on $\frac{dh(\hat{s}_{1:t}, a_{1:t})}{da_k}.$
    \begin{align*}
        \| \nabla_\vtheta J^h(\vtheta; H) \| &= \left \| \sum_{t=1}^H \frac{\partial r(\hat{s}_t)}{\partial \hat{s}_t} \sum_{k=1}^{t-1} \frac{dh(\hat{s}_{1:t-1}, a_{1:t-1})}{da_k} \frac{\partial \pi_\vtheta (\hat{s}_k)}{\partial \vtheta} \right\| \\
         \text{By Lipschitz assumption} \enspace &\leq \left \| \sum_{t=1}^H L_r \sum_{k=1}^{t-1} \frac{dh(\hat{s}_{1:t-1}, a_{1:t-1})}{da_k} L_\pi \right \| \\
         \text{Triangle inequality} \enspace &\leq  \sum_{t=1}^H L_r \sum_{k=1}^{t-1} \left \| \frac{dh(\hat{s}_{1:t-1}, a_{1:t-1})}{da_k} \right \| L_\pi \enspace .
    \end{align*}
    Now, we show how $\|\frac{dh(\hat{s}_{1:t-1}, a_{1:t-1})}{da_k} \|$ grows asymptotically with $t$ and $k$.
    \begin{align*}
        \left \|\frac{dh(\hat{s}_{1:t-1}, a_{1:t-1})}{da_k} \right \| &= \left \|\frac{\partial h(\hat{s}_{1:t-1}, a_{1:t-1})}{\partial a_k} + \sum_{i=k+1}^{t-1} \frac{\partial \hat{s}_t}{\partial \hat{s}_i}\frac{dh(\hat{s}_{1:i-1}, a_{1:i-1})}{da_k} \right \|\\
        \text{Triangle inequality} \enspace & \leq \left \|\frac{\partial h(\hat{s}_{1:t-1}, a_{1:t-1})}{\partial a_k}\right \| + \sum_{i=k+1}^{t-1} \left\| \frac{\partial h(\hat{s}_{1:t-1}, a_{1:t})}{\partial \hat{s}_i}\frac{dh(\hat{s}_{1:i-1}, a_{1:i-1})}{da_k} \right\|\\
        \text{Cauchy-Schwarz inequality} \enspace & \leq \left\|\frac{\partial h(\hat{s}_{1:t-1}, a_{1:t-1})}{\partial a_k} \right\| + \sum_{i=k+1}^{t-1} \left\| \frac{\partial h(\hat{s}_{1:t-1}, a_{1:t-1})}{\partial \hat{s}_i}\right\| \left\| \frac{dh(\hat{s}_{1:i-1}, a_{1:i-1})}{da_k} \right\|\\
        \text{By assumption} &\leq L_a + \sum_{i=k+1}^{t-1} L_s \left\| \frac{dh(\hat{s}_{1:i-1}, a_{1:i-1})}{da_k} \right\| \enspace. 
    \end{align*}
    In the recursive expansion of the previous line, notice the linear dependency w.r.t. $L_a$, and exponential dependency w.r.t. $L_s$ due to the compounding products. Therefore, $\|\frac{dh(\hat{s}_{1:t}, a_{1:t})}{da_k} \| = O(L_a + L_s^{t-k})\enspace .$
    We can then put everything together to derive the asymptotic behavior.
    \begin{align*}
        \| \nabla_\vtheta J^h(\vtheta; H) \| &\leq  \sum_{t=1}^H L_r \sum_{k=1}^{t-1} \left\| \frac{dh(\hat{s}_{1:t-1}, a_{1:t-1})}{da_k} \right\| L_\pi\\
        &= O\left(HL_r +H^2 L_\pi + H^2 \left\| \frac{dh(\hat{s}_{1:H-1}, a_{1:H-1})}{da_1} \right\|\right)\\
        &= O(HL_r +H^2 L_\pi + H^2 L_a + H^2 L_s^H)
    \end{align*}
\end{proof}

\rnnEquivalence*
\begin{proof}
    We begin by providing a formal definition of the recurrent network with its recurrent cell being the dynamics $f$ of the MDP as:
    \begin{align*}
        g_{f\text{-}\texttt{RNN}}(s_1, a_{1:t}) &:= f(s_t, a_t)\\
        s_t&:=f(s_{t-1}, a_{t-1})=f(g_{f\text{-}\texttt{RNN}}(s_1, a_{1:t-1}), a_t)
    \end{align*}
    Begin by developing the LHS of the equation.
    \begin{align*}
        \nabla_\vtheta^{g_{f \text{-}\texttt{RNN}}} J(\vtheta; H) &=
         \sum_{t=1}^H \frac{\partial r(s_t)}{\partial s_t} \sum_{k=1}^{t-1} \frac{\partial s_t}{\partial a_k} \frac{\partial \pi_\vtheta(s_k)}{\partial \vtheta}
    \end{align*}
    where $s_t=f(s_{t-1}, a_{t-1})$ by construction (RNN assumption). Therefore, apply the chain rule $\frac{\partial s_t}{\partial a_k}$:
    \begin{align*}
        \nabla_\vtheta^{g_{f\text{-}\texttt{RNN}}} J(\vtheta; H) &= \sum_{t=1}^H \frac{\partial r(s_t)}{\partial s_t} \sum_{k=1}^{t-1} \frac{\partial s_{k+1}}{\partial a_k} \bigg(\prod_{i={k+1}}^{t-1} \frac{\partial s_{i+1}}{\partial s_i}\bigg)\frac{\partial \pi_\vtheta(s_k)}{\partial \vtheta}
    \end{align*}
    Now, for the RHS of the equation. The policy gradient through equations \ref{eq:mdpobj} can be written as:
    \begin{align*}
        \nabla_\vtheta J^f (\vtheta;H) &= \sum_{t=1}^H \frac{\partial r(s_t)}{\partial s_t} \sum_{k=1}^{t-1} \frac{\partial s_{k+1}}{\partial a_k}\frac{\partial \pi_\vtheta (s_k)}{\partial \vtheta} \left(\prod_{i={k+1}}^{t-1} \frac{\partial s_{i+1}}{ \partial s_i} \right).\\
        \nabla_\vtheta^{g_{f \text{-}\texttt{RNN}}} J(\vtheta; H) &=  \nabla_\vtheta J^f (\vtheta;H).
    \end{align*}
\end{proof}

\generalBound*
\begin{proof}
With a slight abuse of notation, we use $g_t$ to replace $g(s_1, a_{1:t-1})$.
    \begin{equation*}
        \nabla_\vtheta J^g(\vtheta; H) = 
        \left \| \sum_{t=1}^H \frac{\partial r(\hat{s}_t)}{\partial \hat{s}_t} \sum_{k=1}^{t-1} \frac{\partial g_t}{\partial a_k} \frac{\partial \pi(\hat{s}_k)}{\partial \vtheta} \right \| 
        \leq L_r L_\pi \left\| \sum_{t=1}^H \sum_{k=1}^{t-1} \frac{\partial g_t}{\partial a_k} \right\| \leq L_r L_\pi  \sum_{t=1}^H \sum_{k=1}^{t-1} \left\| \frac{\partial g_t}{\partial a_k} \right\|.
    \end{equation*}
\end{proof}

\rnnBound*
\begin{proof}
    We begin by defining the transition function of the recurrent network as $x_t = \sigma(W_x x_{t-1}) + W_a a_{t} + b$, where $\sigma$ is some activation function with gradient norm bounded by $\left \| diag(\sigma'(x))\right \| \leq \frac{1}{\beta}$. Further, consider a linear output cell $s_{t+1} = W_o x_{t}$. Now we begin by showing that $\left \| \frac{\partial x_t}{\partial a_k} \right \| \leq ||W_a^T || (||W_x^T|| \frac{1}{\beta})^{t-k}$: 

    \begin{align*}
        \frac{\partial x_t}{\partial a_k} &= \frac{\partial x_k}{\partial a_k} \prod_{i=k}^{t-1} \frac{\partial x_{i+1}}{\partial x_i} \\
        \left \| \frac{\partial x_t}{\partial a_k}\right \| & =  \left \| W_a^T \prod_{i=k}^{t-1}  \frac{\partial x_{i+1}}{\partial x_i}\right \| \\
        \text{Cauchy-Schwarz Inequality} \enspace &\leq || W_a^T|| \prod_{i=k}^{t-1} \left \| \frac{\partial x_{i+1}}{\partial x_i}\right \| = || W_a^T|| \prod_{i=k}^{t-1} \left \| W_x^T diag(\sigma'(x_i))\right \|\\
        \text{Cauchy-Schwarz Inequality} \enspace& \leq || W_a^T|| \prod_{i=k}^{t-1} \left \| W_x^T \right \| \left \|diag(\sigma'(x_i))\right \|\\
        \text{By assumption} \enspace & \leq || W_a^T|| \prod_{i=k}^{t-1} \left \| W_x^T \right \| \frac{1}{\beta} = ||W_a^T || \left(||W_x^T|| \frac{1}{\beta}\right)^{t-k} \enspace .
    \end{align*}

    We develop the expression $\frac{\partial \hat{s}_t}{\partial a_k}=\frac{\partial \hat{s}_t}{\partial x_{t-1}}\frac{\partial x_{t-1}}{\partial a_k}$, applying the norm and plugging in the above result we obtain:
    \begin{align*}
        \left \| \frac{\partial \hat{s}_t}{\partial a_k}\right \| \leq \left \|  W_o\right \| \left \|W_a^T \right \| \left(||W_x^T|| \frac{1}{\beta}\right)^{t-k-1}
    \end{align*}
    We then use the bound found in Theorem 1:
    \begin{align*}
        \left \| \nabla_\vtheta J^g(\vtheta; H)\right\| &\leq L_r L_\pi \sum_{t=1}^H \sum_{k=1}^{t-1} \left \| \frac{\partial g_t}{\partial a_k} \right \| =  L_r L_\pi \sum_{t=1}^H \sum_{k=1}^{t-1} \left \| \frac{\partial \hat{s}_t}{\partial a_k} \right \|\\
        &\leq   L_r L_\pi \sum_{t=1}^H \sum_{k=1}^{t-1} \left \|  W_o\right \| \left \|W_a^T \right \| \left(||W_x^T|| \frac{1}{\beta}\right)^{t-k-1}= L_r L_\pi \sum_{t=1}^H \sum_{k=1}^{t-1} \left \|  W_o\right \| \left \|W_a^T \right \| \eta^{t-k-1}\enspace ,
    \end{align*}
    where $\eta=||W_x^T|| \frac{1}{\beta}$. 
    \begin{align*}
        \left \| \nabla_\vtheta^{g} J(\vtheta; H)\right\| &\leq L_r L_\pi \sum_{t=1}^H \sum_{k=1}^{t-1} \left \|  W_o\right \| \left \|W_a^T \right \| \eta^{t-k-1} \\
        &= O(H^2||W_o|| \left \| W_a^T\right\| + H^2\eta^H)\\
        &= O(\eta^H) \enspace .
    \end{align*}
\end{proof}

\attBound*
\begin{proof}
    We begin with the definition of the self-attention Actions World Model. Let $Q \in \mathbb{R}^{1\times d_z}$, $K \in \mathbb{R}^{n\times d_z}$, $V \in \mathbb{R}^{n \times d_o}$, attention can be defined as :
$$\text{Attention}(Q, K, V) := \text{softmax}(QK^T)V$$

A self-attention AWM can then be defined in the following way with weight matrices $W_q \in \mathbb{R}^{d_z \times d_a}$, $W_k \in \mathbb{R}^{d_z \times d_a}$, $W_v \in \mathbb{R}^{d_s \times d_a}$.

$$\texttt{ATT}_t(a_1, ...a_{t-1}):=\text{Attention}(a_{t-1}W_q^T, a_{1:t-1}W_k^T, a_{1:t-1}W_v^T)=
\sum_{i=1}^{t-1}c_i (a_i W_v^T) \enspace =\hat{s}_t,$$

where $c_i=\text{softmax}_i(a_{t-1}W_q^TW_k a_{1:t-1}^T)$. The subscript on the softmax operator indicates the $i$th index. Now, we begin by showing the expression for $\frac{\partial g_t}{\partial a_k}$:
    \begin{align*}
        \frac{\partial g_t}{\partial a_k} &= \sum_{i=1}^{t-1} \frac{\partial}{\partial a_k}c_i (a_i W_v^T)\\
        &= \sum_{i=1}^{t-1} \frac{\partial c_i}{\partial a_k}(a_iW_v^T) + c_i\frac{\partial (a_iW_v^T)}{\partial a_k}\\
        \text{By Softmax derivative} \enspace &=  c_kW_v^T  + \sum_{i=1}^{t-1}\bigg( c_i(1\{i=k\} - c_k)(a_iW_v^T)\bigg) \enspace .
    \end{align*}

Then, we take the norm:
\begin{align*}
    \left \| \frac{\partial g_t}{\partial a_k}\right \| &\leq \| c_kW_v^T\| + \sum_{i=1}^{t-1} \left \| c_i(1\{i=k\} - c_k)(a_iW_v^T)\right \| \enspace \text{Triangle Inequality}\\ 
    \text{Cauchy-Schwarz Inequality} \enspace &\leq \| c_kW_v^T\| + \sum_{i=1}^{t-1} \left \| c_i(1\{i=k\} - c_k) \right \| \left \|(a_iW_v^T)\right \|\\
    \text{By definition of Softmax} \enspace &\leq \| W_v^T\| + \sum_{i=1}^{t-1} \left \|(a_iW_v^T)\right \|  \\
    \text{Assuming bounded actions } \enspace & \leq  \| W_v^T\| + \alpha \sum_{i=1}^{t-1} \left \|W_v^T\right \| \enspace , \text{where $a_i \leq \alpha \enspace  \forall \enspace i$} \enspace .
\end{align*}
Finally, we use the bound derived in Theorem 1 to finalize the proof:
\begin{align*}
    \left \| \nabla_\vtheta J^g(\vtheta; H) \right \| &\leq L_r L_\pi \sum_{t=1}^H \sum_{k=1}^{t-1} \bigg(\| W_v^T \| + \alpha \sum_{i=1}^{t-1} \| W_v^T \|\bigg)\\
    &=O(H^3\alpha \| W_v^T \|)\\
    &=O(H^3) \enspace .
\end{align*}
\end{proof}

\subsection{Tightness of Theorem~\ref{thm:hist-explode}}
\label{tightness-hist}
The result of Theorem~\ref{thm:hist-explode} argues that policy gradients computed through equation~\ref{eq:historyobj} may explode as the horizon grows. To make this claim, we comment on the tightness of the upper bound provided in Theorem~\ref{thm:hist-explode}. Consider the one-dimensional case, where all the gradients are positive scalars, and $\| \frac{\partial h(\hat{s}_{1:t}, a_{1:t})}{\partial a_k} \| = L_a$ and $\| \frac{\partial h(\hat{s}_{1:t}, a_{1:t})}{\partial \hat{s}_i}\| = L_s$ for all $\hat{s}_{1:t}, a_{1:t},k, i$. Then, we show that 
\begin{align*}
    \| \nabla_\vtheta J^h(\vtheta; H)\| &= \Omega(L_s^H)
\end{align*}
\begin{proof}
We begin the proof of this lower bound in a similar way, by considering only one of the terms of the summation. For clarity, Let $h_i=h(\hat{s}_{1:i-1}, a_{1:i-1})$.
\begin{align*}
      \|\nabla_\vtheta J^h (\vtheta; H) \| &= \left \| \sum_{t=1}^H \frac{\partial r(\hat{s}_t)}{\partial \hat{s}_t} \sum_{k=1}^{t-1} \frac{dh_t}{da_k} \frac{\partial \pi_\vtheta (\hat{s}_k)}{\partial \vtheta} \right \|\\
      & \geq \left \| \frac{\partial r(\hat{s}_H)}{\partial \hat{s}_H} \frac{dh_H}{da_1} \frac{\partial \pi_\vtheta (\hat{s}_1)}{\partial \vtheta} \right\|\\
      &= \left \| \frac{\partial r(\hat{s}_H)}{\partial \hat{s}_H} \right\| \left\| \frac{dh_H}{da_1}\right\| \left\| \frac{\partial \pi_\vtheta (\hat{s}_1)}{\partial \vtheta} \right\|\\
      &\geq \left\| \frac{\partial r(\hat{s}_H)}{\partial \hat{s}_H} \right\| \left\| \frac{\partial \pi_\vtheta (\hat{s}_1)}{\partial \vtheta} \right\| 
      \left\| \bigg( \frac{\partial h_H}{\partial a_1} + \sum_{i=2}^{H-1} \frac{dh_H}{\partial s_i}\frac{dh_i}{da_1} \bigg) \right\|  \enspace .\\
\end{align*}
Consider now only the longest chain of products in the summation term of the last line. This corresponds to the longest path of the policy gradient through the history world model. 
\begin{align*}
    \|\nabla_\vtheta J^h (\vtheta; H) \| & \geq \left \| \frac{\partial r(\hat{s}_H)}{\partial \hat{s}_H} \right\| \left\| \frac{\partial \pi_\vtheta (\hat{s}_1)}{\partial \vtheta} \right\| 
      \left\| \bigg( \frac{\partial h_H}{\partial a_1} + \frac{dh_2}{\partial a_1}\prod_{i=2}^{H-1} \frac{dh_i}{\partial s_i} \bigg) \right\| \\
      &= \Omega(L_a + L_s^H) \enspace .
\end{align*}
\end{proof}
\subsection{Tightness of Corollary~\ref{thm:rnn}}
\label{tightness}
The result of Corollary~\ref{thm:rnn} argues that policy gradients through Markovian models may explode as the horizon grows. In order to make this claim, we comment on the tightness of the bound. Just like for Theorem \ref{thm:hist-explode}, consider the one-dimensional case, where all the gradients are positive scalars and assume further the gradient norm of the activation function $\sigma$ is lower bounded by $\gamma> 0$ (which is the case for popular activation functions such as sigmoid). We will show that, given these conditions, the bound can be written as a lower bound:
\begin{align*}
    \|\nabla_\vtheta J^{g_\texttt{RNN}}(\vtheta;H) \| = \Omega(\eta^H) \enspace .
\end{align*}
\begin{proof}
We begin the proof by lower bounding the expression for the recurrent cell $\frac{\partial x_t}{\partial a_k}$:
\begin{align*}
    \frac{\partial x_t}{\partial a_k} &\geq ||W_a^T || (||W_x^T|| \gamma)^{t-k}\\
    \left \| \frac{\partial \hat{s}_t}{\partial a_k} \right \| &\geq \| W_o \|||W_a^T || (||W_x^T|| \gamma)^{t-k} \enspace .
\end{align*}
The bound found in Theorem \ref{thm:general-bound} can also be rewritten as an equality given these additional assumptions, since the Cauchy-Schwarz and triangle inequalities become equalities in the one-dimensional positive case: $\| \nabla_\vtheta J^g(\vtheta; H) \| =L_r L_\pi  \sum_{t=1}^H \sum_{k=1}^{t-1} \left\| \frac{\partial g_t}{\partial a_k} \right\| $. Finally, putting these new results together, we find that 
\begin{align*}
    \|\nabla_\vtheta J^{g_{\texttt{RNN}}}(\vtheta; H) \| &\geq L_r L_\pi \sum_{t=1}\sum_{k=1}^{t-1} \|W_o\| \|W_a^T\| (\|W_x^T\|\gamma)^{t-k}\\
    &\geq L_r L_\pi \|W_o \| \|W_a^T\| (\|W_x^T\| \gamma)^{H-1}\\
    &= \Omega(\eta^H) \enspace ,
\end{align*}
where $\eta = \|W_x^T\|\gamma$.
\end{proof}

\section{Training an AWM with Teacher Forcing}
The policy gradient framework provided by Actions World Models can also be used to interpret training design choices used in RL through the lens of sequence modeling. 
As a concrete example, recall that the transition function of a Markovian model can be trained with the following loss:
\begin{align}
\label{eq:one-step-loss}
    \ell^{\hat{f}}(\tau; \vpsi) = \sum_{t=1}^{H-1} \|s_{t+1}^\tau - \hat{f}_\vpsi(s_t^\tau, a_t^\tau)\|^2 \enspace .
\end{align}
We present the following remark.
\begin{remark}
    Training an AWM instantiated with an RNN to predict states using teacher forcing is equivalent to training its recurrent cell with the loss function from Equation~\ref{eq:one-step-loss}.
    \label{rem:teacher}
\end{remark}
Teacher forcing~\citep{williams1989} is an algorithm used to train recurrent neural networks. Instead of autoregressively unrolling them during training, it enforces the outputs (i.e., in this case, the environment/RNN states) to be the ones coming from a ground-truth sequence.  
In other words, when training an AWM as an RNN with teacher forcing, we are essentially training its recurrent cell as a one-step model, as traditionally done in model-based RL applied to a Markovian setting. 

\section{Reinforcement Learning Hyperparameters}
\label{app:hparams}

\subsection{Model-based Hyperparameters}
All model-based backpropagation-based policy optimization methods, including the Markovian agent, AWMs and HWMs were sweeped on three entropy constants for exploration: $[0.1, 0.01, 0.001]$, and the best performing results are reported. Indeed, in addition to the objective function $J$, the policies are optimized with an additional entropy maximization factor as done in \citet{pmlr-v144-amos21a} for exploration. All actions, and states (for History World Models) are embedded with a linear layer with an output size of 72. The RNN is initialized with two hidden layers, with a hidden layer size of (64, 64). The self-attention transformers use a similar architecture to the GPT-2 model \citep{radford2019language} implemented by the Hugging Face Transformer library \citep{wolf2020huggingfaces}. Our transformers stack 2 layers and 3 heads of self-attention modules, with hidden layers of size 64. Timesteps are added as an input to every input of all world models, since we are in a finite-horizon setting. The Markovian world models predicts transitions as a difference function: $\hat{s}_t = \hat{f}(s_{t-1}, a_{t-1}) + s_{t-1}$, using two hidden layers of size (64, 64), and ReLU activation functions. Gradient norms are clipped at a value of 100 for all policy gradients. All other hyper-parameters relating to the policy learning algorithm are shown in Table \ref{tab:modle-based-hparams}.
\begin{table}[!hbt]
    \centering
    \begin{tabular}{c|c}
    \hline
       \textbf{Hyperparameter} &  \textbf{Value} \\
       \hline
       Number of Environment steps & 200000\\
        Dynamics replay ratio & 2\\
        Policy replay ratio & 16\\
        Dynamics batch size & 64\\
        Policy batch size  & 16\\
        Dynamics learning rate & 0.001\\
        Policy learning rate & 0.0001\\
        Replay buffer size & 1e6\\
        Warmup steps & 1500\\
        \hline
    \end{tabular}
    \caption{Hyper-parameters for all BPO algorithms that do not pertain to the world model hyper-parameters.}
    \label{tab:modle-based-hparams}
\end{table}

\subsection{Model-free Hyperparameters}
We use a model-free soft-actor critic \cite{haarnoja2018sac} as a benchmark for the myriad experiments. The critic of this agent is modeled as a 2-layer MLP with 256 hidden units each. Entropy regularization is done using a constant entropy constant. Results in Figure \ref{fig:sample-efficiency} show the best performing results after doing a grid search over the following hyper-parameters: learning rate $=[0.001, 0.0001, 0.00001]$ and entropy constant $=[1., 0.01, 0.001]$. The remaining hyperparameters are shown in Table \ref{tab:model-free-hparams}.
\begin{table}[!hbt]
    \centering
    \begin{tabular}{c|c}
    \hline
        \textbf{Hyperparameter} & \textbf{Value}\\
        \hline
        Number of environment steps & 200000\\
        Critic replay ratio & 2 \\
         Policy replay ratio& 16\\
         Batch size & 128\\
         Discount factor & 0.995\\
         Polyak averaging factor &  0.995\\
         Replay buffer size & 1e6\\
         Warmup steps & 1500\\
         \hline
    \end{tabular}
    \caption{Hyper-parameters for all model-free results on the Myriad environments.}
    \label{tab:model-free-hparams}
\end{table}

\section{Offline Experiments}
The experiments showed in Figure \ref{fig:one-bounce} and Figure \ref{fig:double-pend} are in the offline setting. In both settings, 100000 transitions are collected for all experiments using a uniform random policy. The world models are then trained on data sampled from these transitions for 1000 steps. The batch size and learning rate are set to the same values as shown in Table \ref{tab:modle-based-hparams}.

\section{Online Double-pendulum Experiments}
The goal position of the environment is calculated using some fixed initial angular position (initial action) for all episode lengths. In our experiments, the optimal action is -0.4, which corresponds to an initial angle of $-0.4 \times 180^\circ.$ The results in the double-pendulum experiments use the same hype-parameters as the Myriad experiments, with two exceptions: the number of environment steps used for optimization is 100000 instead of 200000, and the leaerning rate is sweeped for values in [0.01, 0.001, 0.0001, 0.00001, 0.000001] due to the unstable gradients. The complete learning curves are shown in Figure \ref{fig:curves-double-pendulum}.
\begin{figure}[!hbt]
    \centering
    \includegraphics[width=0.9\textwidth]{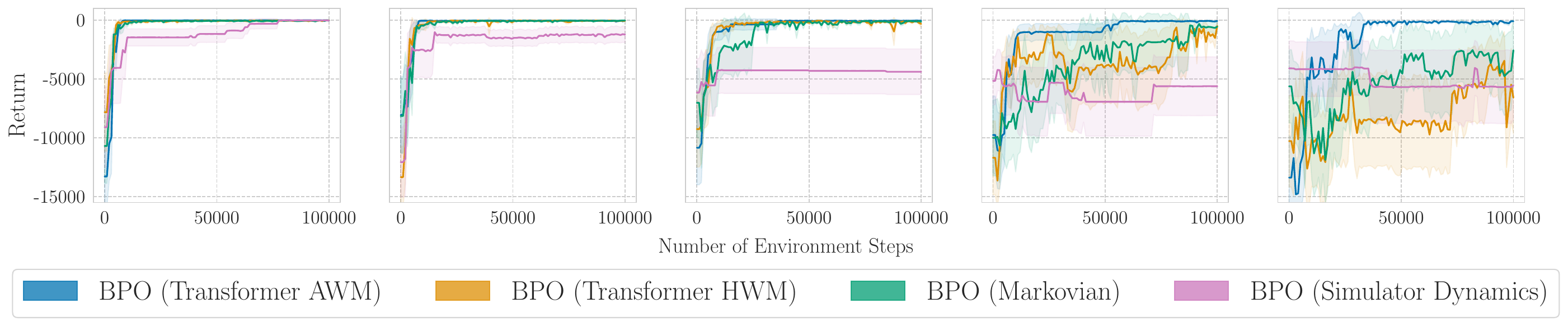}
    \caption{Learning curves of different world models used for BPO on the double-pendulum environment for different horizons. The horizons presented are $H=[5, 10, 20, 50, 100]$ in order (10 seeds $\pm$ std).}
    \label{fig:curves-double-pendulum}
\end{figure}
\section{Myriad Environments}
\label{app:myriad_details}
We give a short overview of each of the Myriad environments used in this work. For more details, refer to \cite{howe_myriad} and \cite{lenhart}.

The underlying dynamics of each of these environments are described by a set of ordinary differential equations, which are then discretized using Euler's method for discrete-time optimal control. In practice, to change a problem's horizon, we fix the duration of the experiment to ensure it is still interesting and simply discretize using a smaller or larger step size.
We normalize the returns, where $0$ is the expected performance of a random policy and $1$ is the performance of an optimal policy provided by \cite{howe_myriad}\footnote{Optimal policies are computed using trajectory optimization on the underlying differential equations.}. Our experiments are conducted on eight environments: \textit{cancer treatment, bioreactor, mould fungicide, bacteria, harvest, invasive plant, HIV treatment,} and \textit{timber harvest}. Each environment's state and action space varies between one and five dimensions.

\textbf{Cancer treatment} follows the normalized density of a cancerous tumour undergoing chemotherapy. The actions at every time step correspond to the strength of the chemotherapy drug at a given time. The goal is to minimize the size of the tumour over a set fixed duration while also minimizing the amount of drugs administered to the patient.

\textbf{Bioreactor} seeks to minimize the total amount of a chemical contaminant that naturally degrades in the presence of bacteria. The actions here allow the agent to feed the bacteria, increasing its population and increasing the rate of the contaminants' degradation. However, a cost is associated with feeding the bacteria.

\textbf{Mould fungicide} models the concentration of a mould population. The goal is to minimize its population by applying a fungicide, which has an associated cost to apply.

\textbf{Bacteria} looks to maximize a bacteria population through the application of a chemical nutrient that stimulates growth. On top of the associated cost of applying the chemical nutrient, the chemical also produces a byproduct that might, in turn, hinder bacterial growth.

\textbf{Harvest} models the growing population of some vegetables, and the goal is to maximize the harvested yield of this population. While harvesting directly contributes to the reward, it consequently slows down the population's exponential growth.

\textbf{Invasive plant} seeks to minimize the presence of an invasive plant species through interventions that remove a proportion of the invasive population. These actions have an associated cost.

\textbf{HIV treatment} follows the evolution of uninfected and infected cells in the presence of a virus. The actions correspond to a drug administered that affects the virus' rate of infection. The use of the drug must also be minimized.

\textbf{Timber harvest} is similar to the harvest environment, except the harvested population is infinite. Instead, harvested timber can be converted into capital, which can then be re-invested in the harvesting operation, stimulating company growth. The goal is to maximize revenue. 

To the best of our knowledge, this paper presents the first results using reinforcement learning for the environments in Myriad. The final performances of all BPO methods and the SAC agent for each of the eight environments are shown in Figures \ref{fig:individual_myriad_envs}. Note that not only does our method (BPO with transformer AWM) have the best aggregate scores for long horizons, but it also achieves the best performance for the longest horizon of 500 steps for each environment.
\begin{figure}[!hbt]
    \centering
    \includegraphics[width=0.95\textwidth]{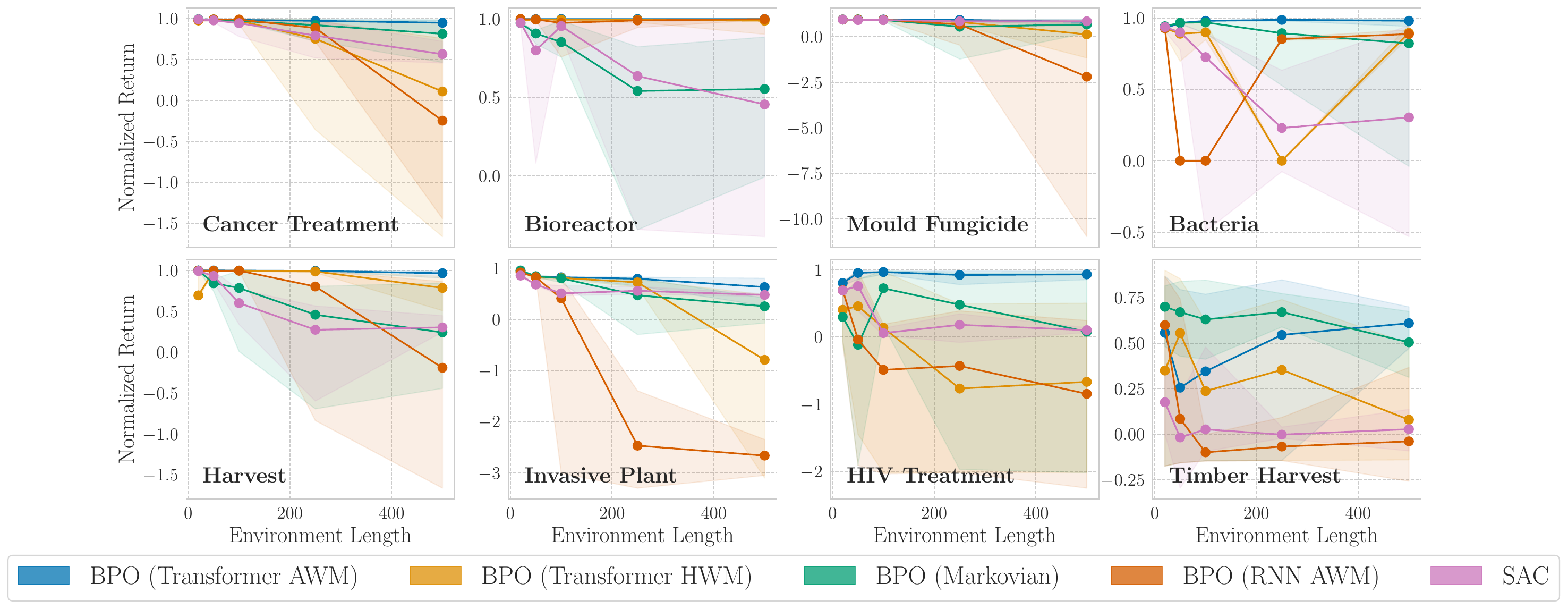}
    \caption{\textbf{Policy gradients through transformer AWMs give better policies for long horizons for every environment. } Learning curves of BPO with different world models on eight of the Myriad environments (10 seeds $\pm$ min/max)}.
    \label{fig:individual_myriad_envs}
\end{figure}

\subsection{Ablation on Stop Gradient}
\label{app:stopgrad-ablation}
Most of the analysis in this work relies on the assumption that the gradient is stopped through the policy input, as seen in equation \ref{eq:mdpobj}, equation \ref{eq:historyobj} and equation \ref{eq:awmrnn}. Just like it was noted in prior work~\cite{hafner2022mastering, hafner2023mastering, ghugare2023simplifying}, we also perform an ablation on whether the stop-gradient operator on the policy input has any major impact on the final performance on the Myriad benchmark. The results in Figure \ref{fig:fullgrad} confirm previous remarks that detaching the policy input from the policy gradient has little to no effect on final performance.

\begin{figure}[!hbt]
    \centering
    \includegraphics[width=0.5\textwidth]{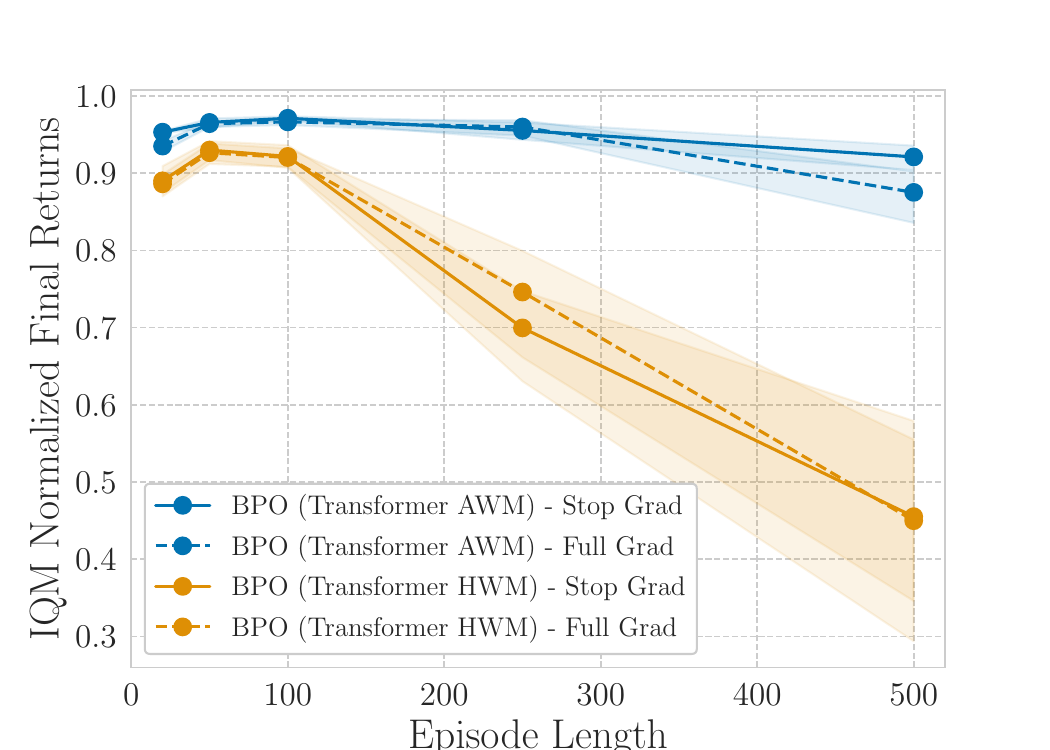}
    \caption{\textbf{Stop gradients on the policy input do not affect final performances on the Myriad benchmark.} Final performances of BPO with a transformer AWM and a transformer HWM, with and without the full policy gradient (10 seeds $\pm$ 95\% C.I.). \emph{Stop Grad} denotes the policy gradient used throughout the main text of this paper, while \emph{Full Grad} denotes the full unbiased policy gradient without any stop gradient operators.}
    \label{fig:fullgrad}
\end{figure}

\subsection{Ablation on LSTMs}
Our analysis suggests that policy gradients through an AWM inherit certain gradient properties of the underlying world model architecture. Long-short-term memory networks (LSTMs)~\cite{hochreiter1998vanishing} were also suggested as an alternative to vanilla RNNs due to their favorable long-term gradients. Indeed, we show in an ablation study that LSTM AWMs also perform quite well on the Myriad benchmark in Figure \ref{fig:lstm-myriad}. Thus, the Actions World Model is agnostic to the specific neural network architecture, allowing any modern sequence models to produce favorable policy gradients, which could be an interesting direction to explore other more advanced sequence models such as state space models~\cite{gu2022efficiently}. We hypothesize that LSTMs are quite effective on environments from Myriad due to the recurrent inductive bias that is compatible with all MDPs. We suspect transformer AWMs show greater promise for large-scale experiments, especially in POMDPs.
\begin{figure}[!hbt]
    \centering
    \includegraphics[width=0.9\textwidth]{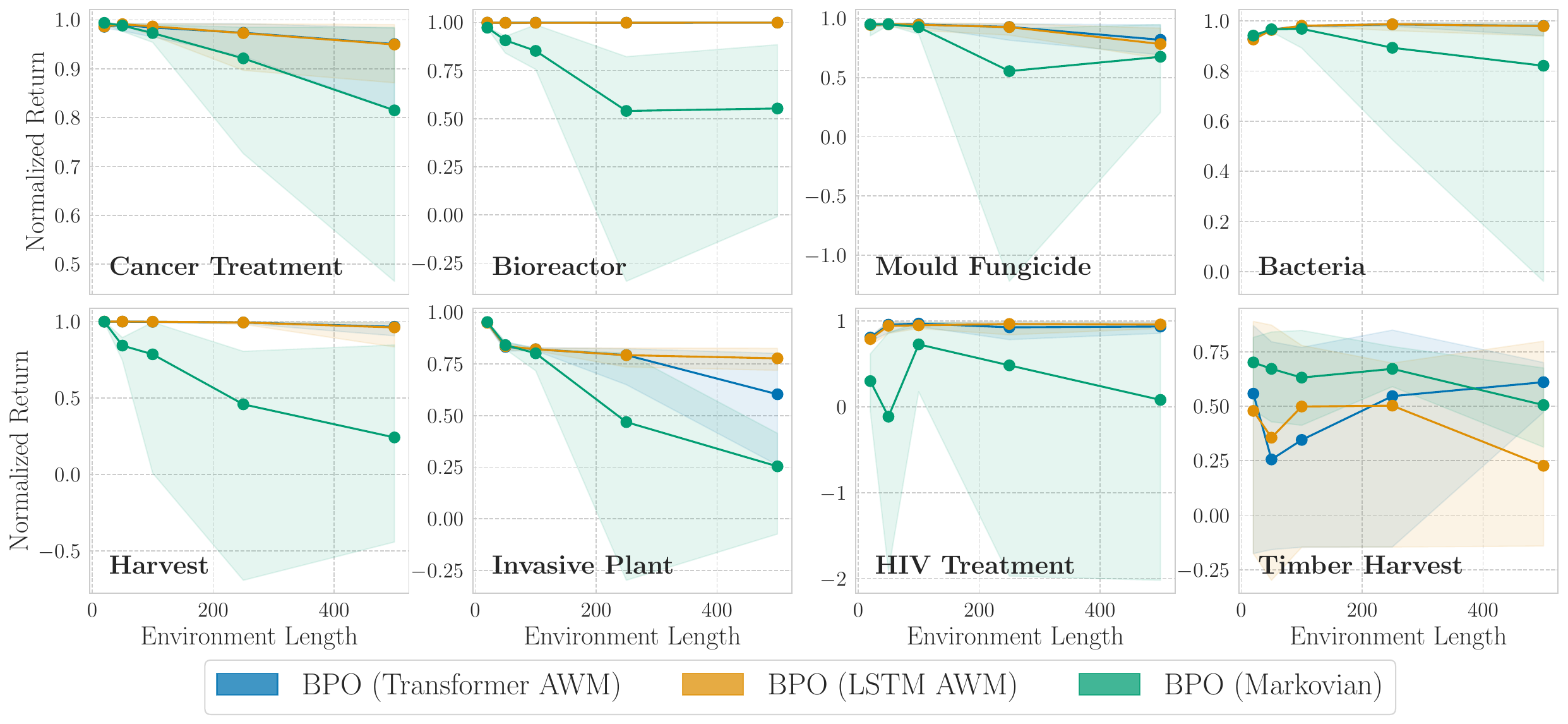}
    \caption{\textbf{Ablation with LSTM Actions World Models on Myriad}. Learning curves of BPO with a transformer AWM, LSTM AWM, and a Markovian world model on eight of the Myriad environments (10 seeds $\pm$ min/max).} 
    \label{fig:lstm-myriad}
\end{figure}

\subsection{Additional Details on Online Decision Transformer}
\label{app:odt}
Recently, a sequence modeling perspective of reinforcement learning has shown promising results on various continuous control tasks in an offline RL setting~\citep{ChenLRLGLASM21,zheng2022online, janner2021sequence}. Although our method differs significantly both conceptually and in the problems they solve (see Appendix \ref{app:seq-rl}), we show experimentally, for completeness, that the online decision transformer~\citep{zheng2022online} performs poorly on the low dimensional Myriad suite. We use the code and hyperparameters provided by~\cite{zheng2022online} in a purely online setup, with a sweep on the number of trajectories gathered per iteration due to the online nature of our problem setup. Importantly, decision transformers (DT) must be conditioned on the \textit{return-to-go} (RTG) to derive desired policies. Prior works~\citep{ChenLRLGLASM21,zheng2022online, janner2021sequence} have shown that DTs are robust to these hyperparameters and can achieve good and sometimes better performance even when the RTG is set to an out-of-distribution return that is impossible to attain.

We show in Figure \ref{fig:odt-agg} that DT performs poorly on the Myriad tasks in an online setting. We summarize some of the takeaways from these experiments:
\begin{itemize}
    \item Decision transformers are still not well suited for a purely online training regime, exhibiting worse performance than simple one-step model-based methods. Moreover, the action-sequence model outperforms the decision transformer in every single environment.
    \item Decision transformers require some expert knowledge in many domains, and simply overshooting the return-to-go can yield sub-optimal and sometimes catastrophic results.
    \item These methods do not scale well with the horizon unlike BPO with Actions World Models. Even on relatively short horizons such as 100 time steps, the online decision transformer's performance quickly drops when compared to its performance for 20 time steps.
\end{itemize}

\begin{figure}[!hbt]
    \centering
    \includegraphics[width=0.6\textwidth]{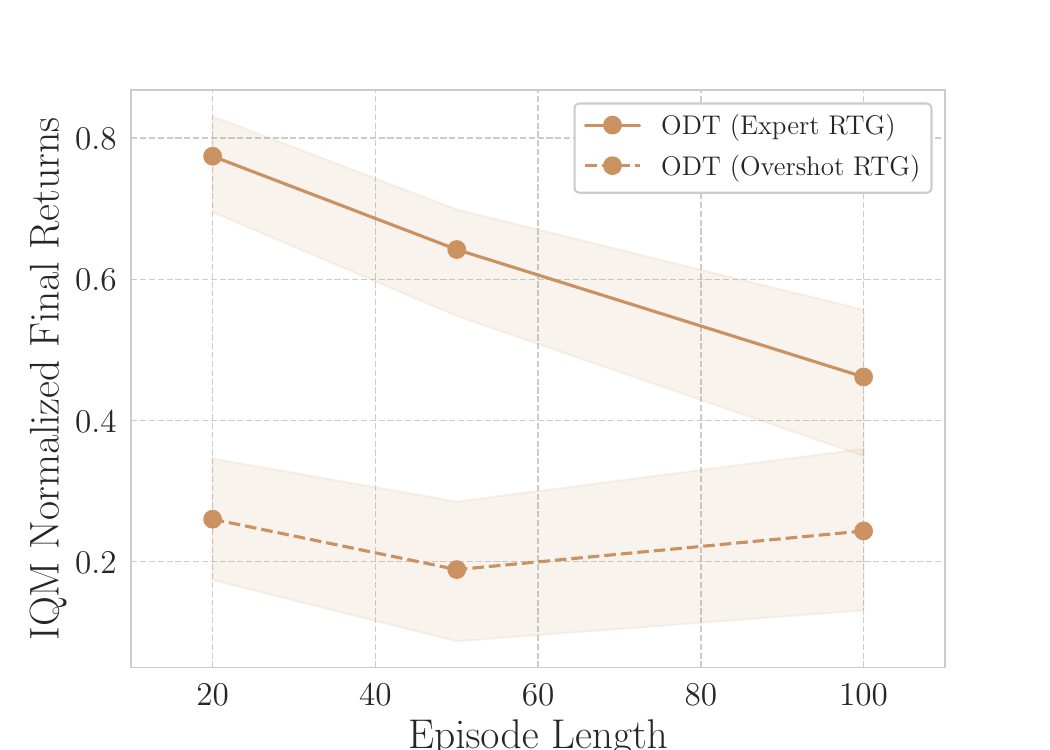}
    \caption{Aggregate final performance of an online decision transformer on the Myriad environments, trained on 200000 environment steps. The \textit{Expert RTG} is conditioned on the optimal return for each environment, while the \textit{Overshot RTG} is conditioned on a return two times higher than the optimal return (10 seeds $\pm$ 95\% C.I.).}
    \label{fig:odt-agg}
\end{figure}

\section{Extended Related Work on Sequence Models in RL}
\label{app:seq-rl}

Sequence models in RL have primarily been used in one of three ways. First, sequence models can be used as history encoders in RL algorithms to maximize returns in partially observable MDPs (POMDPs)~\citep{hausknecht2015deep, ni2021recurrent}, sometimes through a history-dependent world model~\citep{hafner2022mastering, hafner2023mastering}. Second, sequence models have recently shown promise in an imitation learning or offline reinforcement learning setting by treating MDPs as a sequence modeling problem~\citep{ChenLRLGLASM21,zheng2022online, janner2021sequence}, usually conditioning on returns to derive desired policies. Lastly, a separate line of work has used sequence models to reshape the reward landscape for improved temporal credit assignment \citep{hung2019optimizing, arjona2019rudder, liu2019sequence}. In contrast to all of these, our framework is the only one to use an \textit{action-only} conditioned sequence model to directly improve long-term policy gradients in MDPs with no intermediate step. Below, we go into a detailed comparison with each area.

\textbf{Comparison with history-conditioned RL methods for POMDPs.} 
History-conditioned encoders, modeled as sequence models, are often used in POMDPs in both model-free~\citep{hausknecht2015deep, ni2021recurrent} and model-based methods~\citep{hafner2022mastering, hafner2023mastering}. The latter is typically also concerned with predicting observations, they are conditioned on entire history information, while our method includes only actions. The problem setup is also different; such methods are usually concerned with memory in POMDPs~\citep{ni2023transformers}, while ours seeks to improve credit assignment in MDPs.

\textbf{Comparison with decision and trajectory transformers. } Decision transformers \citep{ChenLRLGLASM21,zheng2022online} and trajectory transformers~\citep{janner2021sequence} take a more extreme approach, casting the entire reinforcement learning problem as a sequence modeling one. Conversely, we specifically draw a parallel between policy gradients and sequence models. Their sequence models are conditioned on entire trajectories, which include states, actions, rewards and returns. While trajectory transformers predict states just like our Actions World Models, both trajectory and decision transformers also predict actions. In either case, their sequence models must first be trained on an offline dataset in an imitation learning~\citep{janner2021sequence}, offline RL~\citep{ChenLRLGLASM21}, or pre-training framework~\citep{zheng2022online}, while AWMs work for an online RL setting. 

\textbf{Comparison with reward reshaping methods.} Perhaps more relevant, prior works have already tried harnessing advanced sequence models for improved temporal credit assignment in RL~\citep{hung2019optimizing, arjona2019rudder, liu2019sequence}. In these cases, the predictive power of sequence models, either an LSTM~\citep{arjona2019rudder} or transformer~\citep{hung2019optimizing, liu2019sequence} are used to redistribute or augment the given reward function to produce a surrogate reward function. This surrogate reward is then used in a more traditional model-free RL algorithm. Again, we stress a fundamental difference in the inputs of the sequence models, which all include state information. Our framework also establishes a more direct path between sequence models and credit assignment, avoiding any intermediate steps like reward reshaping. 

\end{document}